\documentclass{article}

 \usepackage[preprint]{neurips_2026}

\usepackage[utf8]{inputenc} 
\usepackage[T1]{fontenc}    
\usepackage{url}            
\usepackage{booktabs}       
\usepackage{amsfonts}       
\usepackage{nicefrac}       
\usepackage{microtype}      
\usepackage[dvipsnames]{xcolor}         

\usepackage{amsthm}

\theoremstyle{remark}
\newtheorem{remark}{Remark}
\theoremstyle{definition}
\newtheorem{definition}{Definition}

\usepackage{graphicx}
\usepackage{float}
\usepackage{amsmath}
\usepackage{algorithm}
\usepackage{algorithmic}
\usepackage{amssymb}
\usepackage{multirow}
\usepackage{colortbl}    
\usepackage{pifont}      
\usepackage{adjustbox}   
\usepackage{hyperref}
\usepackage[most]{tcolorbox}
\newcommand{\cmark}{\textcolor[HTML]{2E8B57}{\ding{51}}}  
\newcommand{\xmark}{\textcolor[HTML]{888888}{\ding{55}}}  
\newtheorem{theorem}{Theorem}
\newtheorem{corollary}[theorem]{Corollary}
\newtheorem{lemma}[theorem]{Lemma}
\newtheorem{proposition}[theorem]{Proposition}
\newtheorem{assumption}[theorem]{Assumption}

\definecolor{LightGreenBox}{HTML}{EAF5EA}

\usepackage[table]{xcolor}
\definecolor{LightBlueBox}{HTML}{EAF3FF}

\title{R2V Agent: Teaching SLMs When to Ask for Help}

\author{%
\textbf{Raghu Vamshi Hemadri}$^1$\thanks{Both authors contributed equally to this research.} \quad 
\textbf{Humaira Firdowse Mohammed}$^2$\footnotemark[1] \quad 
\textbf{Rishabh Maheshwary}$^4$ \\  
\textbf{Srivatsava Daruru}$^3$ \quad
\textbf{Sagar Davasam}$^4$ \quad
\textbf{Vikas Yadav}$^4$ \quad
\textbf{Srinivas Sunkara}$^4$ \quad 
\textbf{Sai Rajeswar}$^{4,5,6}$ \\ \\
$^1$New York University Tandon School of Engineering \quad
$^2$University of California, San Diego\\
$^3$Stanford University \quad
$^4$ServiceNow Research \quad
$^5$Mila - Quebec AI Institute \quad
$ ^6$Université de Montréal
}

\begin{document}

\maketitle

\begin{abstract}
Efficient agentic systems should incur expensive frontier-model costs only on decisions where a cheaper local model is likely to fail. Existing LLM cascades usually route whole queries before execution, but task difficulty shifts mid-trajectory - after flaky tool calls, truncated observations, or compounding local errors - making pre-execution routing brittle. We introduce \textbf{R2V-Agent}, a risk-calibrated SLM-LLM routing framework for interactive agents. R2V combines four components: a distilled small language model (SLM) policy, a stronger teacher LLM, a lightweight process verifier that scores candidate actions at each step, and a calibrated step-level router. The router is our central contribution: after the SLM is trained, it estimates residual failure risk at each step and escalates only when teacher intervention is warranted. To make the routing problem well-defined, we first train a stable local SLM using a standard offline pipeline: behavioral cloning (BC) on teacher trajectories, followed by verifier-guided Direct Preference Optimization (DPO) with consistency regularization. The router is then trained on this fixed policy's residual failures using Brier-calibrated probability estimation and a Conditional Value-at-Risk (CVaR)-constrained objective that penalizes worst-case failures across perturbation seeds. Across HumanEval+, TextWorld, and TerminalBench with four SLM backbones, R2V improves the reliability-cost frontier: it achieves $94.3\%$ HumanEval+ success with $0.60\%$ LLM escalation, recovers TextWorld from $64.6\%$ SLM-only success to $98.2\%$ at $41.7\%$ escalation, and reaches $93.3\%$ TerminalBench success at $33.9\%$ LLM calls, roughly half the heuristic-router cost.
\end{abstract}

\section{Introduction}

Autonomous language-model agents increasingly operate in interactive settings such as software engineering \citep{devin_2024, swe_agent_2024}, web navigation \citep{webarena_2024}, tool use, and system administration. Reliability in these environments often depends on expensive frontier LLMs, especially when tasks require long-horizon planning, recovery from failed tool calls, or reasoning under incomplete observations \citep{gaia_2024}. At the same time, inference-time scaling and process supervision suggest that additional computation is most valuable when allocated to the right intermediate decisions rather than spent uniformly across all steps \citep{snell_testtime_2024, xi_agentprm_2025}. This raises a natural systems question: can a cheap local SLM execute routine agent steps while a stronger LLM is invoked only when the current decision is genuinely high risk?

\begin{figure}[t]
    \centering
    \includegraphics[width=0.77\textwidth]{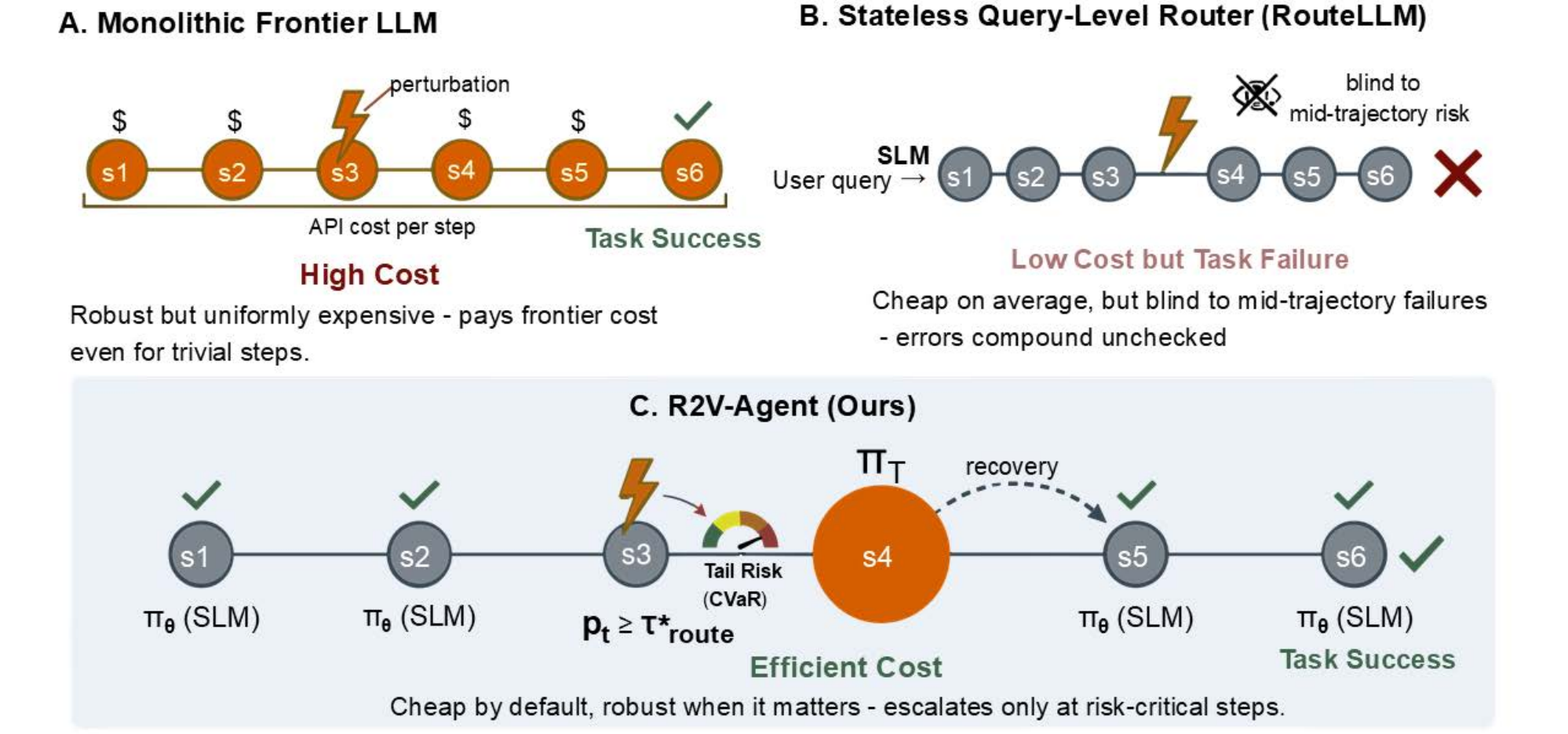}
    \caption{Traditional agentic workflows versus R2V-Agent. Monolithic frontier-LLM execution is robust but expensive; stateless query-level routing is cheaper but blind to mid-trajectory risk; R2V performs step-level escalation only when verifier-grounded risk exceeds the routing threshold.}
    \label{fig:r2v_overview}
\vspace{-0.5em}
\end{figure}

Existing LLM cascades and routers make progress toward cost reduction, but most are designed for static, query-level allocation. Systems such as FrugalGPT and RouteLLM decide which model should answer a request before execution, or cascade after a full response has already been generated \citep{frugalgpt_2023, routellm_2024, dekoninck_unified_2024, wang_hybrid_2024}. This abstraction is poorly matched to agentic POMDPs, where difficulty is trajectory-dependent. A task may begin in-distribution, then become risky after a flaky tool call, truncated observation, prompt injection, or compounding local error. Routing the whole task to the SLM is brittle, while routing the whole task to the LLM is unnecessarily expensive. Figure~\ref{fig:r2v_overview} illustrates the difference: monolithic frontier execution is robust but costly, query-level routing is cheap but blind to mid-trajectory failures, and R2V-Agent escalates only at risk-critical steps.

We propose \textbf{R2V-Agent}, a risk-calibrated framework for step-level SLM-LLM routing in interactive agents. R2V factorizes execution into four components: an efficient SLM policy $\pi_\theta$, a stronger teacher LLM $\pi_T$, a lightweight process verifier $V_\phi(x_t,a_t)$ that scores candidate actions, and a router $r_\psi(f_t)$ that estimates whether the current step should be escalated. The router is the main methodological contribution. It learns when a \emph{fixed deployed SLM} should defer to the teacher based on residual step risk, rather than predicting the difficulty of the initial query.

To make this routing problem well-defined,  we first train a stable local SLM with an offline distillation pipeline. The SLM is warm-started with behavioral cloning on successful teacher trajectories, including their perturbed variants. It is then refined with verifier-guided DPO and consistency regularization across perturbation instantiations, while keeping the BC-trained policy frozen as the DPO reference. We use this setup because it is stable and reproducible, not because BC or DPO are themselves new optimization contributions. Only after the distilled SLM policy is fixed do we train the router on that policy's residual failures. This separation ensures that the main experiments isolate compute allocation rather than entangling routing with changes to the local policy.


The router is trained as a calibrated probabilistic decision interface. Brier regularization encourages $r_\psi(f_t)$ to estimate the conditional probability that continuing with the SLM will fail, while a CVaR-constrained objective discourages routers that are cheap on average but brittle under tail perturbation seeds \citep{chow_cvar_2014, tamar_cvar_2015}. The resulting deployment rule is simple: execute locally by default, but escalate when calibrated residual risk exceeds the routing threshold and budget remains.

Empirically, R2V improves the reliability-cost frontier across HumanEval+, TextWorld, and TerminalBench. Averaged over Gemma-9B, LLaMA-3.1-8B, Qwen2.5-7B, and Qwen2.5-14B backbones, R2V reaches $94.3\%$ success on HumanEval+ with only $0.6\%$ LLM escalation; recovers TextWorld to the $98.2\%$ LLM success ceiling at $41.7\%$ escalation, close to the $35.4\%$ oracle rate; and improves TerminalBench success from $81.1\%$ to $93.3\%$ while using $33.9\%$ LLM calls, compared with $66.0\%$ for the heuristic router at comparable reliability.

\begin{tcolorbox}[colback=LightGreenBox,
leftrule=0.5mm,top=0.5mm,bottom=0.5mm]

\textbf{Our contributions are three fold:}
\begin{itemize}
    \item We formulate SLM-LLM agent routing as a \emph{stateful step-level} allocation problem in perturbed POMDPs.
    \item We introduce a residual-risk routing objective that combines Brier-calibrated probability estimation with CVaR-constrained tail-risk control.
    \item We evaluate this router on top of a fixed, standard BC$\rightarrow$DPO recovery-distilled SLM across three benchmarks and four backbones, showing that calibrated step routing substantially reduces frontier-model calls while preserving high task success.
\end{itemize}

\vspace{-0.8em}
\end{tcolorbox}

\section{Related Work}
\label{sec:related_work}
R2V-Agent builds on efficient model routing, process supervision, recovery-oriented policy improvement, and risk-sensitive decision making; Appendix~\ref{app:related_work} provides a more extensive review. Existing LLM cascades and routers reduce inference cost by selecting which model should answer a request, either before generation or after a cheaper model has produced a response \citep{frugalgpt_2023, dekoninck_unified_2024, routellm_2024, ding_hybrid_2024, zhuang_embedllm_2024, hu_routerbench_2024}. These methods are effective for static request-level allocation, but they treat difficulty as a property of the initial prompt or completed response. R2V targets a different setting: in interactive agents, risk evolves after tool calls, partial observations, prompt perturbations, and earlier local actions. We therefore route at the step level, conditioning escalation on the unfolding agent state and verifier-grounded evidence about the deployed SLM's residual failures.

\begin{table}[H]
\caption{Comparison of R2V-Agent with representative LLM routing, cascade, verifier, and agent-training frameworks. R2V is distinguished by stateful step-level routing, perturbation recovery, and calibrated tail-risk control.}
\label{tab:comparison}
\centering
\scriptsize
\setlength{\tabcolsep}{3.2pt}
\renewcommand{\arraystretch}{0.92}
\begin{adjustbox}{max width=\textwidth}
\begin{tabular}{l c c c c c c}
\toprule
\textbf{Method}
  & \textbf{\shortstack{Step-Level\\Routing}}
  & \textbf{\shortstack{Stateful\\Agent}}
  & \textbf{\shortstack{Process\\Verifier}}
  & \textbf{\shortstack{Risk-Calib.\\(CVaR)}}
  & \textbf{\shortstack{Perturb.\\Robustness}}
  & \textbf{\shortstack{Recovery\\Training}} \\
\midrule
FrugalGPT~\citep{frugalgpt_2023}
  & \xmark & \xmark & \xmark & \xmark & \xmark & \xmark \\
RouteLLM~\citep{routellm_2024}
  & \xmark & \xmark & \xmark & \xmark & \xmark & \xmark \\
Hybrid LLM~\citep{ding_hybrid_2024}
  & \xmark & \xmark & \xmark & \xmark & \xmark & \xmark \\
Unified routing/cascading~\citep{dekoninck_unified_2024}
  & \xmark & \xmark & \xmark & \xmark & \xmark & \xmark \\
EmbedLLM~\citep{zhuang_embedllm_2024}
  & \xmark & \xmark & \xmark & \xmark & \xmark & \xmark \\
AgentPRM~\citep{xi_agentprm_2025}
  & \xmark & \cmark & \cmark & \xmark & \xmark & \xmark \\
SCoRe~\citep{kumar_score_2024}
  & \xmark & \xmark & \xmark & \xmark & \xmark & \cmark \\
\midrule
\rowcolor[HTML]{EAF5EA}
\textbf{R2V-Agent (ours)}
  & \cmark & \cmark & \cmark & \cmark & \cmark & \cmark \\
\bottomrule
\end{tabular}
\end{adjustbox}
\vspace{-0.5em}
\end{table}

R2V also relates to process reward models and verifiers for multi-step reasoning and agentic control \citep{wang_mathshepherd_2024, snell_testtime_2024, xi_agentprm_2025}. We use the verifier not only to score candidate actions, but also to construct recovery preferences and routing features. This connects to preference optimization and recovery-style training \citep{rafailov_dpo_2023, kumar_score_2024}; however, R2V uses the standard BC$\rightarrow$DPO scaffold only to obtain a stable fixed SLM, and trains the router afterward on that. Finally, R2V draws on calibration and risk-sensitive RL: Brier calibration gives a probabilistic residual-risk interface \citep{guo_calibration_2017}, while CVaR discourages routers that are cheap on average but brittle under perturbations \citep{chow_cvar_2014, tamar_cvar_2015}. Table~\ref{tab:comparison} summarizes the distinction: prior routers are largely query-level and stateless, while R2V combines stateful step routing, process verification, perturbation recovery, and CVaR-calibrated escalation.


\section{Preliminaries}
\label{sec:preliminaries}

We model agent-environment interaction as a Partially Observable Markov Decision Process (POMDP) \citep{sutton_barto_2018} augmented with a latent perturbation seed \(z\):
\begin{equation}
\label{eq:pomdp_setup}
\mathcal{M}_z=(\mathcal{S},\mathcal{A},\mathcal{O},\mathcal{Z},\mathcal{T},\mathcal{O}_z,R,\gamma),
\qquad
\mathcal{O}_z:\mathcal{S}\times\mathcal{A}\times\mathcal{Z}\to\Delta(\mathcal{O}),
\qquad
x_t=(G,o_{\le t},a_{<t}).
\end{equation}
Here \(z\in\mathcal{Z}\) captures unobserved stochastic factors such as tool flakiness, latency spikes, truncated logs, or prompt corruption; \(G\) is the task goal. A frontier policy \(\pi(a\mid x)\) is robust but expensive at every step, whereas a local SLM is efficient but brittle under out-of-distribution perturbations. R2V-Agent therefore factorizes execution into an efficient base policy \(\pi_\theta(a\mid x)\), a stronger teacher policy \(\pi_T(a\mid x)\), a lightweight process verifier \(V_\phi(x,a)\in[0,1]\), and a router \(r_\psi(f_t)\in[0,1]\) that decides whether to execute locally or escalate to the teacher.

R2V is trained as an \emph{ordered pipeline} rather than a joint multi-loss objective.  First, an offline trajectory pool is constructed by replaying teacher demonstrations under perturbation operators, yielding a mix of unperturbed and perturbed trajectories. Behavioral cloning (BC) on the episode-success subset of this pool produces a nominal policy \(\pi_\theta^{\mathrm{BC}}\). Second, this fixed BC policy generates candidate actions on the same trajectory contexts, which the verifier ranks into preference pairs, after which the SLM is refined with DPO and consistency regularization using the BC policy as the frozen DPO reference. Only after the distilled SLM is fixed do we train the router on its residual high-risk states. This separation makes the router calibrated to the failure modes of the policy actually deployed, and isolates compute allocation from changes in local-policy training.

\subsection{Verifier-Guided Distillation on Perturbed Topologies}
\label{sec:distillation}

Let \(\mathcal{D}_{\mathrm{exp}}=\{(x_t,a_t^\star)\}\) denote expert demonstrations collected from \(\pi_T\). Before BC training, we replay these trajectories under perturbation operators \(\mathcal{P}_k\) to construct a perturbed set \(\mathcal{D}_{\mathrm{pert}}\), that mimic tool failures, missing observations, adversarial strings, or other stochastic disruptions and then train BC on the episode-success subset of \(\mathcal{D}_{\mathrm{traj}}=\mathcal{D}_{\mathrm{exp}}\cup\mathcal{D}_{\mathrm{pert}}\):

\begin{equation}
\label{eq:bc}
\mathcal{L}_{\mathrm{BC}}(\theta)
=
-\mathbb{E}_{(x_t,a_t^\star)\sim\mathcal{D}_{\mathrm{traj}}}
\left[\log \pi_\theta(a_t^\star\mid x_t)\right].
\end{equation}
At each \(x_t'\) from \(\mathcal{D}_{\mathrm{traj}}\), the BC policy proposes
\[
a_t^{(1)},\ldots,a_t^{(K)}\sim \pi_\theta^{\mathrm{BC}}(\cdot\mid x_t'),
\]
which are scored by \(V_\phi\); when needed, we also query \(\pi_T\) for a recovery action \(a_t^T\). We then form preference pairs \((a_t^+,a_t^-)\), where \(a_t^+\) is verifier-approved or teacher-recovered and \(a_t^-\) is a lower-quality BC-sampled alternative. Thus, preference learning targets the BC policy's own recoverable failure modes rather than generic preference data. Appendix~\ref{sec:theory} gives auxiliary results on verifier-guided candidate selection and bounded pairwise verifier noise.

After constructing preference pairs, we freeze the BC policy as the DPO reference,
\(\pi_{\mathrm{ref}}=\operatorname{stopgrad}(\pi_\theta^{\mathrm{BC}})\), initialize from \(\pi_\theta^{\mathrm{BC}}\), and optimize DPO on perturbed states:
\begin{equation}
\label{eq:dpo}
\mathcal{L}_{\mathrm{DPO}}(\theta)
=
-\mathbb{E}_{(x_t',a^+,a^-)}
\left[
\log\sigma\left(
\beta\left[
\log\frac{\pi_\theta(a^+\mid x_t')}{\pi_{\mathrm{ref}}(a^+\mid x_t')}
-
\log\frac{\pi_\theta(a^-\mid x_t')}{\pi_{\mathrm{ref}}(a^-\mid x_t')}
\right]\right)
\right].
\end{equation}
This offline objective shifts the SLM toward perturbation-recovery actions without unstable multi-turn RL. To further stabilize behavior under partial observability, we regularize action distributions across different perturbation realizations of the same latent state:
\begin{equation}
\label{eq:consistency}
\mathcal{L}_{\mathrm{cons}}(\theta)
=
\mathbb{E}_{t,z,z'}
\left[
\operatorname{JSD}
\left(
\pi_\theta(\cdot\mid x_t^{(z)})
\;\middle\|\;
\pi_\theta(\cdot\mid x_t^{(z')})
\right)
\right].
\end{equation}
This encourages perturbation-invariant action distributions \citep{lin_jsd_1991, ppcl_2024, zhang_consistency_2023}; Appendix~\ref{sec:theory} shows that it controls the transfer gap between seen and unseen perturbation seeds.

\section{Risk-Calibrated Routing \& Verifier-Distillation (R2V)}
\label{sec:methodology}


\begin{figure}[t]
    \centering
    \includegraphics[width=0.85\textwidth]{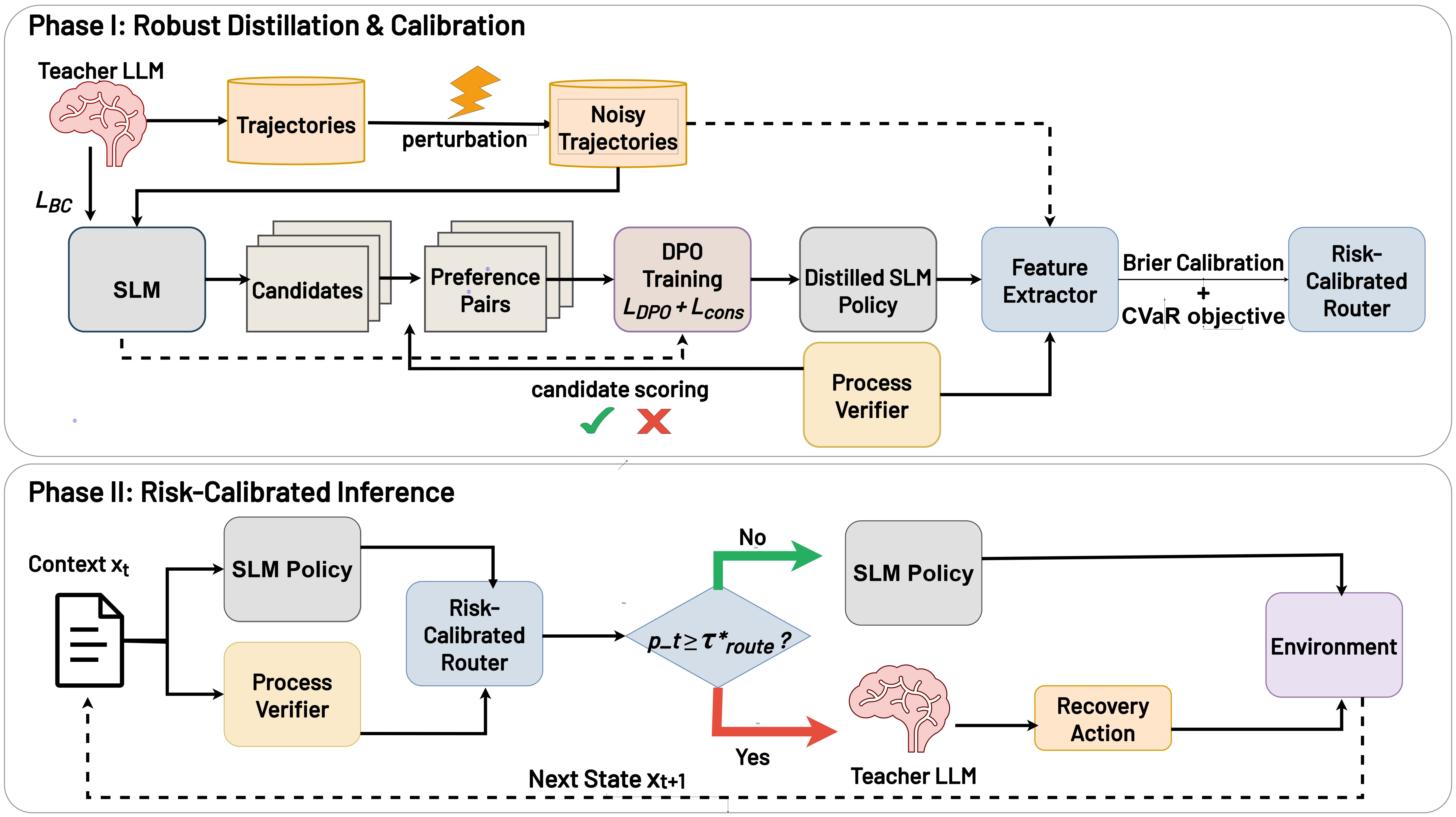}
    \caption{\textbf{R2V-Agent pipeline.}
    \textit{Phase~I:} Teacher trajectories are perturbed to train a BC-initialized SLM with verifier-guided DPO and consistency regularization; verifier and policy features then train a Brier-calibrated and CVaR-calibrated router.
    \textit{Phase~II:} At inference, the SLM acts by default, while the teacher LLM is invoked only when the router's residual-risk estimate exceeds $\tau^{*}_{\mathrm{route}}$.}
    \label{fig:r2v_method}
\end{figure}

The fixed local policy used by the routing experiments is obtained in two explicit stages:
\begin{align}
\label{eq:stage_bc}
\theta_{\mathrm{BC}}
&=
\arg\min_\theta \mathcal{L}_{\mathrm{BC}}(\theta),
\end{align}
\begin{align}
\label{eq:stage_recovery}
\theta^\star
&=
\arg\min_{\theta:\,\theta_0=\theta_{\mathrm{BC}}}
\left[
\mathcal{L}_{\mathrm{DPO}}(\theta;\pi_{\mathrm{ref}})
+
\lambda_{\mathrm{cons}}\mathcal{L}_{\mathrm{cons}}(\theta)
\right],
\qquad
\pi_{\mathrm{ref}}
=
\operatorname{stopgrad}(\pi_{\theta_{\mathrm{BC}}}).
\end{align}
Equations~\eqref{eq:stage_bc}--\eqref{eq:stage_recovery} specify the standard offline recovery-distillation scaffold used to produce a stable SLM. BC provides the nominal policy and frozen reference distribution; verifier-guided DPO and consistency provide perturbation-aware refinement. The core R2V contribution begins after \(\pi_\theta=\pi_{\theta^\star}\) is fixed: a calibrated router estimates the SLM's residual failure risk and allocates teacher compute only when intervention is warranted.

\subsection{Risk-Calibrated Trajectory Routing via CVaR}
\label{sec:routing}

For each context \(x_t\), the fixed SLM proposes \(K\) candidate actions, the verifier scores them, and a compact risk feature vector is extracted:
\begin{equation}
\label{eq:router_features}
f_t=\Phi(x_t,\pi_\theta,V_\phi)\in\mathbb{R}^{15}.
\end{equation}
The features include token-level entropy and log-probability statistics; verifier-score mean, standard deviation, spread, best score, and worst score over candidates; candidate consistency and semantic entropy; horizon fraction and absolute step index; normalized context length; and a goal-length proxy. Benchmark and perturbation-type one-hot features are implemented but disabled in the reported experiments.

The routing label is derived from the downstream SLM episode outcome, not from counterfactual teacher success. For a logged SLM rollout \(\tau^{\mathrm{SLM}}(z)\) under perturbation seed \(z\), let \(S_{\mathrm{SLM}}(z)\in\{0,1\}\) denote binary episode success. Each visited context receives
\begin{equation}
\label{eq:router_label}
y_t=\mathbf{1}\!\left[S_{\mathrm{SLM}}(z)=0\right].
\end{equation}
Thus, \(y_t\) is not a ground-truth step-correctness label; it marks contexts on SLM-failure trajectories. The router \(r_\psi(f_t)\) estimates the residual probability that continuing with the deployed SLM will fail the episode from the current local evidence.

For smooth offline training, the router probability
\begin{equation}
\label{eq:soft_router_decision}
p_t=r_\psi(f_t)
\end{equation}
relaxes the hard escalation decision. The cost-sensitive surrogate and Brier calibration loss are
\begin{align}
\label{eq:routing_loss}
\ell_{\mathrm{route}}(p_t,y_t)
&=
c_{\mathrm{SLM}}(1-p_t)+c_{\mathrm{LLM}}p_t+\kappa y_t(1-p_t),
\qquad c_{\mathrm{SLM}}<c_{\mathrm{LLM}},\;\kappa>0,
\\
\label{eq:brier}
\mathcal{L}_{\mathrm{Brier}}(\psi)
&=
\mathbb{E}_{(f_t,y_t)\sim\mathcal{D}_{\mathrm{route}}}
\left[
\left(r_\psi(f_t)-y_t\right)^2
\right].
\end{align}
To control tail failures across perturbation seeds, we aggregate stepwise routing losses into a seed-level risk
\begin{equation}
\label{eq:seed_risk}
\widetilde{R}_\psi(z)
=
\frac{1}{|\tau^{\mathrm{SLM}}(z)|}
\sum_{t\in\tau^{\mathrm{SLM}}(z)}
\ell_{\mathrm{route}}\!\left(r_\psi(f_t),y_t\right),
\end{equation}
and train the router with the differentiable CVaR-constrained Lagrangian
\begin{equation}
\label{eq:cvar_router}
\min_{\psi}\max_{\lambda\ge0}
\;
\mathbb{E}_z\!\left[\widetilde{R}_\psi(z)\right]
+
\lambda\left(
\operatorname{CVaR}_{\alpha}\!\left(\widetilde{R}_\psi(z)\right)-\epsilon
\right)
+
\lambda_B\mathcal{L}_{\mathrm{Brier}}(\psi).
\end{equation}
The hard rule \(d_t=\mathbf{1}[r_\psi(f_t)\ge\tau_{\mathrm{route}}]\) is used only for validation-time threshold selection and test-time execution in Algorithm~\ref{alg:r2v_inference}; we do not differentiate through the threshold, environment transition, or remaining-budget gate. Optimization is therefore offline and smooth, while evaluation uses the actual thresholded routed policy.

Appendix~\ref{sec:theory} shows that Brier minimization recovers the conditional episode-failure risk
\begin{equation}
\label{eq:conditional_risk}
q^\star(f)=\mathbb{P}(y_t=1\mid f_t=f).
\end{equation}
For the unconstrained one-step surrogate in Eq.~\eqref{eq:routing_loss}, the Bayes threshold before clipping is
\[
\bar{\tau}_{\mathrm{route}}
=
\frac{c_{\mathrm{LLM}}-c_{\mathrm{SLM}}}{\kappa}.
\]
Since router probabilities lie in \([0,1]\), the feasible threshold is
\begin{equation}
\label{eq:threshold}
\tau_{\mathrm{route}}^\star
=
\min\left\{
1,
\max\left\{
0,
\frac{c_{\mathrm{LLM}}-c_{\mathrm{SLM}}}{\kappa}
\right\}
\right\}.
\end{equation}
When \(0\le c_{\mathrm{LLM}}-c_{\mathrm{SLM}}\le\kappa\), this reduces to the unclipped threshold. If \(c_{\mathrm{LLM}}-c_{\mathrm{SLM}}>\kappa\), escalation is too expensive for any calibrated risk below one; if \(c_{\mathrm{LLM}}\le c_{\mathrm{SLM}}\), the surrogate always prefers escalation. Algorithm~\ref{alg:r2v_inference} additionally enforces a finite LLM budget, so the remaining-budget check acts as a feasibility projection during online execution.

For the plug-in hard router \(\widehat{d}_t=\mathbf{1}[r_\psi(f_t)\ge\tau_{\mathrm{route}}^\star]\), the excess one-step routing cost is bounded by calibration error:
\begin{equation}
\label{eq:calibration_regret}
\mathbb{E}
\left[
\ell_{\mathrm{route}}(\widehat{d}_t,y_t)
-
\ell_{\mathrm{route}}(d_t^\star,y_t)
\right]
\le
\kappa
\mathbb{E}
\left[
\left|r_\psi(f_t)-q^\star(f_t)\right|
\right].
\end{equation}
This makes calibrated residual-risk probabilities, rather than ad hoc entropy thresholds, the central routing interface.

\subsection{Inference with Risk-Calibrated Escalation}
\label{sec:inference}

At test time, the distilled SLM executes by default, while the teacher LLM is reserved for steps whose calibrated residual risk exceeds the routing threshold and for which budget remains. Algorithm~\ref{alg:r2v_inference} summarizes the full procedure: the SLM proposes actions, the verifier scores local quality, and the router allocates expensive teacher compute only at predicted high-risk steps.

\section{Experiments}
\label{sec:experiments}

We evaluate whether R2V-Agent preserves frontier-model reliability while invoking the LLM only on high-risk agent steps. Following the step-level execution interface in Figure~\ref{fig:r2v_overview}, Figure~\ref{fig:r2v_method}, and Algorithm~\ref{alg:r2v_inference}, the SLM proposes candidate actions, the process verifier scores local action quality, and the router escalates only when the predicted residual risk exceeds the calibrated threshold.

\subsection{Experimental Setup}
\label{sec:exp_setup}

\paragraph{Benchmarks.}
We evaluate R2V on three sequential decision-making benchmarks that stress different forms of agent risk: code generation, text-based embodied control, and terminal engineering. All benchmarks use a task-level $70/15/15$ train/validation/test split with seed $42$, and each clean trajectory is replayed under $5$ independently sampled perturbation seeds. On \textbf{HumanEval+} \citep{liu_evalplus_2023}, we use all $164$ Python programming tasks and cast each problem as a multi-step environment with \texttt{write\_code}, \texttt{test}, and \texttt{submit} actions; Gemini~3~Flash~Preview provides $820$ clean teacher trajectories. On \textbf{TextWorld}, we use ALFWorld \citep{shridhar_alfworld_2020}, a household-control benchmark with $250$ tasks across six task families, where agents issue textual commands such as \emph{go to}, \emph{take}, \emph{put}, \emph{clean}, and \emph{heat}; Gemini~3~Flash~Preview provides $800/200/250$ train/validation/test episodes. HumanEval+ and TextWorld use four perturbation families: tool flakiness, partial observability, prompt injection, and distractor observations. On \textbf{TerminalBench} \citep{lee_terminalbench_2025}, we use $89$ real-world shell tasks from \texttt{yoonholee/terminalbench-trajectories} \citep{yoonholee_terminalbench_trajectories}, including toolchain compilation, git-history repair, gRPC implementation, and numerical-kernel optimization. TerminalBench teacher trajectories are collected from Claude~Opus~4.6, GPT-5, and Gemini~3.1~Pro, and its perturbations are restricted to tool flakiness and partial observability because semantic prompt injection and distractors are less meaningful for shell execution.

\paragraph{Verifiers and router.}
Each benchmark uses a lightweight CPU-only process verifier $V_\phi(x_t,a_t)\in[0,1]$ that provides local evidence rather than oracle supervision. HumanEval+ combines smoke-test execution with structural code checks; TextWorld scores observation quality, action type, goal alignment, and repetition; and TerminalBench parses terminal output for success/failure signatures, command category, repetition, and completion intent. Router labels are derived from downstream SLM-only episode outcomes: a visited context receives $y_t=1$ if the fixed SLM rollout containing that context fails the task, and $y_t=0$ otherwise. Thus, labels mark residual SLM failure risk rather than counterfactual teacher correctness at each step. Full verifier definitions and label statistics are given in Appendix~\ref{app:exp_details}.

The router $r_\psi:\mathbb{R}^{15}\rightarrow[0,1]$ is a two-layer MLP with hidden widths $128$ and $64$, GELU activations, dropout $p=0.2$, Brier calibration, and post-hoc temperature scaling \citep{guo_calibration_2017}. Its inputs summarize SLM uncertainty, top-$K$ candidate log-probabilities, verifier-score statistics, candidate diversity, horizon position, and context length. We use $K=5$ candidates sampled with vLLM \citep{kwon_vllm_2023}; unless otherwise stated, R2V uses $\alpha=0.20$, $\varepsilon=0.10$, and $c_{\mathrm{LLM}}/c_{\mathrm{SLM}}=50$.

\paragraph{Baselines.}
We compare against \textbf{SLM-only}, \textbf{LLM-only}, \textbf{Entropy Router}, \textbf{Heuristic Router}, and an offline \textbf{Oracle Router}. The entropy router escalates using a validation-calibrated token-entropy threshold, while the heuristic router uses non-learned verifier rules. The oracle is a diagnostic upper bound: it has hindsight access to the SLM-only episode outcome and marks contexts on failed SLM rollouts before execution. It is therefore not an attainable policy under Algorithm~\ref{alg:r2v_inference}, since an online router must infer risk from $f_t$ without knowing the future trajectory. All deployable methods share the same distilled SLM and verifier and differ only in the routing decision, so the main results in Table~\ref{tab:main_results} evaluate step-level compute allocation rather than attributing gains to the standard BC/DPO distillation scaffold.

\subsection{Main Results}
\label{sec:main_results}

Table~\ref{tab:main_results} reports success rate (SR) and LLM escalation rate averaged across four SLM backbones: Gemma-9B \citep{gemma_2024}, LLaMA-3.1-8B \citep{llama3_2024}, Qwen2.5-7B, and Qwen2.5-14B \citep{qwen25_2024}. Since all deployable methods share the same distilled SLM and verifier, these results isolate the routing decision. The theory in Section~4 predicts that a calibrated router should escalate only when the estimated residual failure risk justifies the teacher call, while the CVaR constraint should avoid policies that are cheap on average but brittle under perturbation seeds. The results match this behavior: R2V uses little teacher compute in low-failure regimes, escalates substantially when hidden state risk is high, and remains below heuristic-router escalation on the most heterogeneous benchmark.


\begin{table}[t]
\centering
\caption{\textbf{Benchmark-level results} averaged over four SLM backbones under noisy evaluation. ``LLM\%'' is the fraction of steps escalated to the teacher. R2V uses $\alpha{=}0.20$, $\varepsilon{=}0.10$. The oracle is a non-deployable hindsight diagnostic.}
\label{tab:main_results}
\setlength{\tabcolsep}{3.5pt}
\begin{tabular}{lcccccc}
\toprule
\textbf{Method} &
\multicolumn{2}{c}{\textbf{HumanEval+}} &
\multicolumn{2}{c}{\textbf{TextWorld}} &
\multicolumn{2}{c}{\textbf{TerminalBench}} \\
\cmidrule(lr){2-3}\cmidrule(lr){4-5}\cmidrule(lr){6-7}
& SR (\%) & LLM\% & SR (\%) & LLM\% & SR (\%) & LLM\% \\
\midrule
SLM-only            & 91.9 & 0.0\%  & 64.6 & 0.0\%  & 81.1 & 0.0\% \\
Entropy Router      & 92.9 & 0.8\%  & 64.6 & 0.0\%  & 87.4 & 4.1\% \\
Heuristic Router    & 97.4 & 26.6\% & 98.4 & 99.9\% & 95.0 & 66.0\% \\
\rowcolor{LightGreenBox}
\textbf{R2V (ours)} & \textbf{94.3} & \textbf{0.6\%} &
                       \textbf{98.2} & \textbf{41.7\%} &
                       \textbf{93.3} & \textbf{33.9\%} \\
\rowcolor{LightBlueBox}
Oracle Router       & 98.6 & 8.1\%  & 98.6 & 35.4\% & 97.8 & 18.4\% \\
LLM-only            & 98.8 & 100\%  & 98.7 & 100\%  & 98.1 & 100\% \\
\bottomrule
\end{tabular}
\end{table}

\paragraph{HumanEval+.}
HumanEval+ is a low-failure regime: only $8.11\%$ of steps belong to failed SLM episodes. Consistent with the calibrated-threshold view, R2V rarely escalates. It improves average SR from $91.9\%$ to $94.3\%$ while using only $0.6\%$ LLM calls. The entropy router gives a smaller gain at similar cost, whereas the heuristic router reaches higher SR but requires $26.6\%$ LLM calls. Thus, on self-correcting code tasks, most of the benefit comes from identifying a small set of high-risk steps rather than broadly increasing teacher usage.

\paragraph{TextWorld.}
TextWorld is the clearest test of state-dependent risk under partial observability. The SLM-only policy reaches only $64.6\%$ SR, and the entropy router remains at $64.6\%$ with $0.0\%$ escalation, showing that token uncertainty alone misses many observation-induced failures. R2V reaches $98.2\%$ SR with $41.7\%$ LLM calls, within $0.4$ points of the hindsight oracle's $98.6\%$ SR and within $0.5$ points of LLM-only execution. The heuristic router reaches $98.4\%$ SR, but at $99.9\%$ escalation, effectively collapsing to LLM-only execution. This supports the residual-risk formulation: the useful signal is not just local entropy, but calibrated risk from verifier, uncertainty, and trajectory-state features.

\paragraph{TerminalBench.}
TerminalBench is the most heterogeneous setting. SLM-only performance is much lower at $81.1\%$ SR, and the oracle gap is larger than on the other benchmarks. R2V improves SR to $93.3\%$ while using $33.9\%$ LLM calls, roughly half the heuristic router's $66.0\%$ escalation rate. The remaining gap to the oracle, which reaches $97.8\%$ SR with $18.4\%$ LLM calls, indicates that some shell-task failures are identifiable in hindsight but not yet fully captured by the current online risk features. Per-backbone results in Appendix~\ref{app:per_model_results} show that R2V both recovers weaker models and reduces unnecessary escalation for stronger models.

\begin{figure*}[t]
\centering
\includegraphics[width=\textwidth]{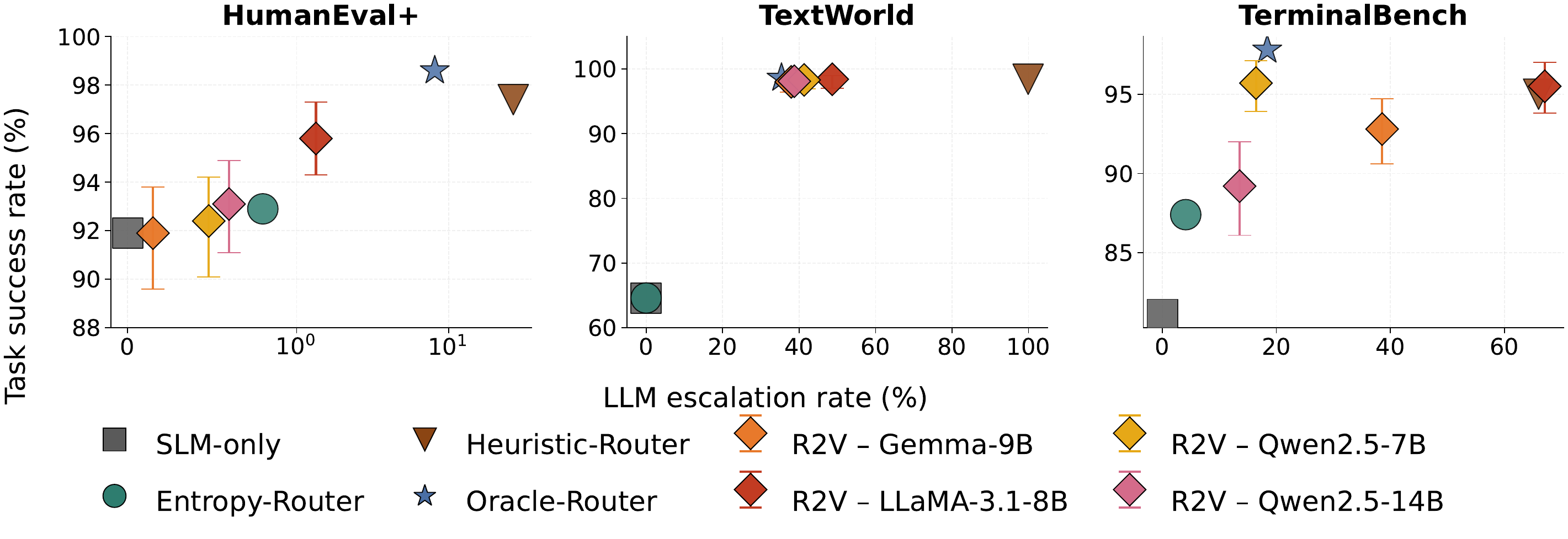}
\caption{\textbf{Cost-performance Pareto frontier.} Each R2V point corresponds to one SLM backbone with 95\% bootstrap confidence intervals. R2V gives near-free gains on HumanEval+, closely tracks the oracle on TextWorld, and recovers substantial SR for weaker TerminalBench backbones while remaining below heuristic-router cost.}
\label{fig:pareto}
\end{figure*}

Figure~\ref{fig:pareto} summarizes the same trend as a cost-performance frontier. HumanEval+ occupies a low-risk regime where escalation is rarely needed; TextWorld requires roughly $40\%$ escalation because failures are strongly tied to hidden state and observation quality; and TerminalBench exposes residual headroom, with the hindsight oracle reaching $97.8\%$ SR at only $18.4\%$ escalation. This oracle point should be interpreted as a diagnostic estimate of recoverable failure mass in logged SLM rollouts, not as a deployable policy with online access to future outcomes.

\subsection{Ablations}
\label{sec:ablations}

Table~\ref{tab:compact_ablations} summarizes targeted ablations of the components most directly tied to R2V's routing interface; full sweeps are in Appendix~\ref{app:full_ablations}. These ablations do not aim to causally decompose every SLM training stage. BC and DPO are used as a standard offline scaffold to produce a fixed deployed SLM; the main experimental question is whether a calibrated step-level router can allocate teacher compute more effectively than entropy or heuristic rules. Within this scope, the consistency rows test how perturbation regularization changes the operating point, while the feature rows test which inference-time signals are needed for risk estimation.


\begin{table}[t]
\centering
\caption{\textbf{Compact ablation summary.} Cells report SR / LLM\%. Consistency and feature ablations are evaluated on Qwen2.5-7B. Feature ablations are applied at inference time without retraining. Closed-source rows are averaged over all four backbones.}
\label{tab:compact_ablations}
\setlength{\tabcolsep}{3.2pt}
\begin{tabular}{llccc}
\toprule
\textbf{Ablation} & \textbf{Variant} &
\textbf{HumanEval+} & \textbf{TextWorld} & \textbf{TerminalBench} \\
\midrule
\rowcolor{LightBlueBox}
Consistency & $\lambda_{\mathrm{cons}}=0$   & 94.0 / 1.1 & 97.8 / 35.4 & 93.8 / 35.7 \\
\rowcolor{LightGreenBox}
\textbf{Consistency} & $\boldsymbol{\lambda_{\mathrm{cons}}=0.20}$ & \textbf{92.4 / 0.4} & \textbf{98.3 / 41.2} & \textbf{95.7 / 16.3} \\
\midrule
\rowcolor{LightBlueBox}
Features & w/o verifier & 92.4 / 1.5 & 62.6 / 19.8 & 90.6 / 2.2 \\
\rowcolor{LightBlueBox}
Features & w/o entropy  & 92.3 / 0.3 & 62.6 / 31.8 & 91.8 / 12.0 \\
\rowcolor{LightBlueBox}
Features & entropy only & 96.0 / 20.4 & 64.6 / 2.4 & 87.6 / 18.7 \\
\midrule
\rowcolor{LightBlueBox}
Closed API & no entropy      & 92.8 / 0.5 & 97.4 / 37.0 & 90.9 / 25.6 \\
\rowcolor{LightGreenBox}
\textbf{Closed API} & \textbf{pseudo-entropy}  & \textbf{94.2 / 0.6} & \textbf{98.1 / 37.4} & \textbf{92.8 / 27.1} \\
\bottomrule
\end{tabular}
\end{table}

Consistency regularization mainly changes the cost--reliability operating point. On HumanEval+, moving from $\lambda_{\mathrm{cons}}=0$ to $0.20$ lowers escalation from $1.1\%$ to $0.4\%$ but also reduces SR from $94.0\%$ to $92.4\%$. TextWorld is stable, improving slightly from $97.8\%$ to $98.3\%$ SR. TerminalBench benefits most: $\lambda_{\mathrm{cons}}=0.20$ increases SR from $93.8\%$ to $95.7\%$ while reducing LLM calls from $35.7\%$ to $16.3\%$. This is consistent with the perturbation-invariance motivation of the JSD term: the largest gains appear in the setting with high observation and tool-output variability.

The CVaR sweep in Appendix~\ref{app:full_ablations} supports the tail-risk objective. The canonical setting $(\alpha,\varepsilon)=(0.20,0.10)$ achieves $97.76\%$ SR at $27.47\%$ LLM calls. A more conservative constraint, $(0.05,0.02)$, raises SR to $98.05\%$ but increases escalation to $56.63\%$, while a relaxed setting, $(0.20,0.15)$, reduces escalation to $21.11\%$ at $95.29\%$ SR. Thus, $\varepsilon$ behaves as intended: tighter tail-risk control buys reliability through additional teacher calls.

Feature ablations show why calibrated routing needs both local quality evidence and uncertainty signals. TextWorld collapses to $62.6\%$ SR when either verifier or entropy features are removed, and entropy-only routing stays near SLM-only performance at $64.6\%$ SR. HumanEval+ is more forgiving: entropy-only routing reaches $96.0\%$ SR, but at a high $20.4\%$ escalation rate. TerminalBench also depends on richer risk features: removing verifier or entropy reduces SR to $90.6\%$ and $91.8\%$, respectively.

\subsection{Closed-Source Model Adaptation}
\label{sec:closed_source}

Commercial SLM APIs may hide token log-probabilities, so entropy-based router features may be unavailable. Table~\ref{tab:closed_source} (in Appendix~\ref{app:closed_source_app}) evaluates two black-box variants: \textbf{no-entropy}, which zeros policy-uncertainty dimensions and uses verifier, candidate-diversity, and context features; and \textbf{pseudo-entropy}, which replaces token entropy with normalized entropy over verifier scores. For scores $s_k=V_\phi(x_t,a_t^{(k)})$ over $K$ sampled candidates,
\[
\tilde{p}_k=\frac{\exp(s_k)}{\sum_{j=1}^{K}\exp(s_j)},\qquad
H_{\mathrm{ver}}(x_t)=-\frac{1}{\log K}\sum_{k=1}^{K}\tilde{p}_k\log \tilde{p}_k .
\]
This provides a black-box uncertainty proxy without SLM token log-probabilities.


R2V degrades gracefully on HumanEval+ and TextWorld under closed-source constraints. On HumanEval+, no-entropy changes SR only from $93.3\%$ to $92.8\%$, while pseudo-entropy recovers to $93.2\%$. On TextWorld, pseudo-entropy nearly matches the full router, reaching $98.1\%$ SR with ECE $0.054$ and Brier $0.106$. TerminalBench is more sensitive: no-entropy drops from $93.3\%$ to $90.9\%$ SR and worsens ECE from $0.162$ to $0.238$, while pseudo-entropy partially recovers performance and calibration, reaching $92.8\%$ SR with ECE $0.181$ and Brier $0.204$. Thus, black-box deployment is practical for code and navigation tasks, while terminal-engineering agents benefit from true log-probability access when available.

\section{Conclusion}
\label{sec:conclusion}

We introduced \textbf{R2V-Agent}
\footnote{\textbf{Code:} \href{https://github.com/RaghuHemadri/r2v-agent.git}{github.com/RaghuHemadri/r2v-agent}.}
, a step-level routing framework that helps small language model agents decide when to ask a stronger LLM for help. Instead of routing the whole task before execution, R2V estimates risk at each agent step using SLM uncertainty, process-verifier signals, and trajectory features. The SLM is trained with a standard BC$\rightarrow$DPO recovery-distillation pipeline, while the main contribution is the calibrated router trained with Brier calibration and CVaR-based tail-risk control.

Across HumanEval+, TextWorld, and TerminalBench, R2V improves the reliability--cost trade-off: it keeps most steps local, escalates only when risk is high, and uses fewer teacher calls than heuristic routing at similar reliability. These results show that efficient agents do not need to choose between cheap but brittle SLM execution and expensive full LLM execution; calibrated step-level routing provides a practical middle ground.

R2V still depends on the quality of the verifier and the risk features, and the remaining oracle gap on TerminalBench shows that some failures are not yet easy to detect online. Future work can improve the verifier, learn richer state features, and adapt the LLM budget dynamically across longer and more realistic agent tasks.



\newpage

\bibliography{refrences}
\bibliographystyle{colm2026_conference}

\newpage

\appendix


\section{Algorithms}
\label{sec:algo}

\begin{algorithm}[H]
\caption{R2V-Agent Training Pipeline}
\label{alg:r2v_training}
\begin{algorithmic}[1]
\REQUIRE Teacher LLM \(\pi_T\), initial SLM \(\pi_\theta\), verifier \(V_\phi\), perturbation operators \(\{\mathcal{P}_k\}\)
\STATE Collect expert trajectories \(\mathcal{D}_{\mathrm{exp}}\) using \(\pi_T\) on training tasks
\STATE Apply \(\{\mathcal{P}_k\}\) across seeds \(z \in \mathcal{Z}\) to obtain perturbed trajectories \(\mathcal{D}_{\mathrm{pert}}\) and form offline trajectory pool \(\mathcal{D}_{\mathrm{traj}} \leftarrow \mathcal{D}_{\mathrm{exp}} \cup \mathcal{D}_{\mathrm{pert}}\)
\STATE Warm-start the SLM by minimizing \(\mathcal{L}_{\mathrm{BC}}\) on \(\mathcal{D}_{\mathrm{traj}}\), yielding \(\pi_\theta^{\mathrm{BC}}\)
\STATE Initialize empty datasets \(\mathcal{D}_{\mathrm{pref}}, \mathcal{D}_{\mathrm{cons}}, \mathcal{D}_{\mathrm{route}}\)
\FOR{contexts \(x_t'\) from \(\mathcal{D}_{\mathrm{traj}}\)}
    \STATE Sample \(K\) candidate actions \(a_t^{(1:K)} \sim \pi_\theta^{\mathrm{BC}}(\cdot \mid x_t')\)
    \STATE Score candidates with \(V_\phi(x_t', a_t^{(k)})\); optionally query teacher action \(a_t^T \sim \pi_T(\cdot \mid x_t')\)
    \STATE Construct preference pairs \((x_t', a_t^+, a_t^-)\) and add them to \(\mathcal{D}_{\mathrm{pref}}\)
    \STATE Construct paired perturbation views for consistency regularization and add them to \(\mathcal{D}_{\mathrm{cons}}\)
\ENDFOR
\STATE Freeze a DPO reference policy \(\pi_{\mathrm{ref}} \leftarrow \operatorname{stopgrad}(\pi_\theta^{\mathrm{BC}})\)
\STATE Continue training the SLM from \(\pi_\theta^{\mathrm{BC}}\) using only \(\beta_1 \mathcal{L}_{\mathrm{DPO}} + \beta_2 \mathcal{L}_{\mathrm{cons}}\), yielding distilled policy \(\pi_\theta\)
\FOR{rollouts of the distilled policy \(\pi_\theta\) under diverse perturbation seeds}
    \STATE At each step, sample SLM candidates, score them with \(V_\phi\), and extract risk features \(f_t\)
    \STATE Define episode-failure label \(y_t\) from the SLM-only continuation outcome under perturbation seed \(z\)
    \STATE Add \((f_t, y_t)\) to \(\mathcal{D}_{\mathrm{route}}\)
\ENDFOR
\STATE Train router \(r_\psi\) on \(\mathcal{D}_{\mathrm{route}}\) with Brier calibration and a CVaR-constrained cost objective
\RETURN Distilled SLM \(\pi_\theta\), verifier \(V_\phi\), router \(r_\psi\)
\end{algorithmic}
\end{algorithm}

\begin{algorithm}[H]
\caption{R2V-Agent Risk-Calibrated Inference Loop}
\label{alg:r2v_inference}
\begin{algorithmic}[1]
\REQUIRE Distilled SLM \(\pi_\theta\), Teacher LLM \(\pi_T\), Verifier \(V_\phi\), Router \(r_\psi\), Threshold \(\tau_{\mathrm{route}}^\star\)
\STATE Initialize context \(x_0 = (G, o_0)\)
\FOR{$t = 0, 1, \dots, H-1$}
    \STATE Sample \(K\) candidate actions \(a_t^{(1)}, \dots, a_t^{(K)} \sim \pi_\theta(\cdot \mid x_t)\)
    \STATE Score candidates \(s_t^{(k)} = V_\phi(x_t, a_t^{(k)})\) and set \(\hat{a}_t^{\mathrm{SLM}} = \arg\max_k s_t^{(k)}\)
    \STATE Compute risk features \(f_t = \Phi(x_t,\pi_\theta,V_\phi)\)

    \STATE Compute escalation probability \(p_t=r_\psi(f_t)\)
    \STATE Set hard deployment decision \(d_t=\mathbf{1}[p_t\ge \tau_{\mathrm{route}}^\star]\)
    \IF{\(d_t=1\) \AND Remaining Budget \(> 0\)}

        \STATE Sample \(a_t \sim \pi_T(\cdot \mid x_t)\)
        \STATE Decrement LLM Budget
    \ELSE
        \STATE Set \(a_t = \hat{a}_t^{\mathrm{SLM}}\)
    \ENDIF
    \STATE Execute \(a_t\), observe \(o_{t+1} \sim \mathcal{O}_z(\cdot \mid x_t,a_t,z)\)
    \STATE Update context \(x_{t+1} = (x_t, a_t, o_{t+1})\)
\ENDFOR
\end{algorithmic}
\end{algorithm}

\section{Theoretical Guarantees for Risk-Calibrated Routing}
\label{sec:theory}

This appendix collects the formal assumptions and proofs omitted from the main text.

We study the agent defined on the noisy-observation POMDP
\[
\mathcal{M}_z = (\mathcal{S}, \mathcal{A}, \mathcal{O}, \mathcal{Z}, \mathcal{T}, \mathcal{O}_z, R, \gamma),
\]
where \(z \in \mathcal{Z}\) denotes the perturbation seed, \(x_t\) the step-\(t\) context, \(\pi_\theta\) the distilled SLM, \(\pi_T\) the teacher LLM, \(V_\phi\) the verifier, and \(r_\psi\) the router.

At each step \(t\), the SLM generates \(K\) candidates
\[
a_t^{(1)},\dots,a_t^{(K)} \sim \pi_\theta(\cdot \mid x_t),
\]
the verifier scores them by
\[
s_t^{(k)} = V_\phi(x_t, a_t^{(k)}),
\]
and the best SLM candidate is
\[
\hat a_t^{\mathrm{SLM}}
=
\operatorname*{arg\,max}_{1 \le k \le K}
V_\phi(x_t, a_t^{(k)}).
\]

The router receives a feature vector \(f_t\) and outputs
\[
p_t = r_\psi(f_t)\in[0,1].
\]
A hard routing decision is then obtained by thresholding:
\[
d_t = \mathbf{1}[p_t \ge \tau_{\mathrm{route}}].
\]


Throughout, \(y_t\in\{0,1\}\) denotes the episode-derived SLM failure label attached to step \(t\). For a logged SLM rollout \(\tau^{\mathrm{SLM}}(z)\), let \(S_{\mathrm{SLM}}(z)\in\{0,1\}\) be its binary episode success. Each context \(x_t\in\tau^{\mathrm{SLM}}(z)\) receives \(y_t=\mathbf{1}[S_{\mathrm{SLM}}(z)=0]\). This label is not a per-step correctness annotation and is not defined by comparing the verifier score of the SLM action against a teacher action. The teacher enters the framework as the escalation policy used by the routed system; the calibration target for the router is the conditional probability that the deployed SLM will fail the episode from the current local evidence.

\subsection{Assumptions}

\begin{assumption}[Boundedness and measurability]
\label{ass:bounded}
For every step \(t\),
\[
V_\phi(x_t,a_t) \in [0,1],
\qquad
r_\psi(f_t) \in [0,1],
\qquad
y_t \in \{0,1\},
\qquad
S_z \in \{0,1\}.
\]
Moreover, the feature vector satisfies
\[
\|f_t\|_2 \le B_f
\]
for some finite constant \(B_f > 0\).
\end{assumption}

\begin{assumption}[Well-defined calibration target]
\label{ass:calibration_target}
Define the conditional escalation probability
\[
q^\star(f) := \mathbb{P}(y_t = 1 \mid f_t = f).
\]
We assume that \(q^\star(f)\) is well-defined for almost every \(f\).
\end{assumption}

\begin{assumption}[Cost-sensitive routing surrogate]
\label{ass:cost_sensitive}
For hard routing analysis, we evaluate a routing decision \(d_t \in \{0,1\}\) with the stepwise surrogate loss
\[
\ell_{\mathrm{route}}(d_t,y_t)
=
c_{\mathrm{SLM}}(1-d_t)
+
c_{\mathrm{LLM}}d_t
+
\kappa\, y_t(1-d_t),
\]
where \(c_{\mathrm{SLM}} < c_{\mathrm{LLM}}\) and \(\kappa > 0\) is the penalty for not escalating on a step where escalation was actually needed.
\end{assumption}

\begin{assumption}[Perturbation pairing and stepwise Lipschitzness]
\label{ass:consistency}
Suppose two perturbation seeds \(z,z'\in\mathcal Z\) produce two observed contexts \(x_t^{(z)}\) and \(x_t^{(z')}\) corresponding to the same latent environment state \(s_t\). Let
\[
P_t^{(z)} := \pi_\theta(\cdot \mid x_t^{(z)}),
\qquad
P_t^{(z')} := \pi_\theta(\cdot \mid x_t^{(z')}).
\]
Assume the one-step surrogate loss \(\ell_t(P;s_t)\in[0,1]\) satisfies
\[
|\ell_t(P;s_t)-\ell_t(Q;s_t)|
\le
L\,\operatorname{TV}(P,Q)
\]
for all action distributions \(P,Q\) and some constant \(L>0\).
\end{assumption}

\subsection{Calibration and Bayes-optimal routing}

\begin{theorem}[Brier-optimal calibration of the router]
\label{thm:brier_calibration}
Under Assumptions~\ref{ass:bounded} and \ref{ass:calibration_target}, the population minimizer of the Brier risk
\[
\mathcal{R}_{\mathrm{Brier}}(r_\psi)
=
\mathbb{E}\bigl[(r_\psi(f_t)-y_t)^2\bigr]
\]
is the conditional escalation probability
\[
r^\star(f) = q^\star(f) = \mathbb{P}(y_t=1 \mid f_t=f).
\]
\end{theorem}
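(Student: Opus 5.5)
The approach is the standard bias--variance decomposition of the squared loss, carried out conditionally on the feature vector. Fix an arbitrary measurable router $r:\mathbb{R}^{15}\to[0,1]$ and write $r(f_t)$ for its output. By Assumption~\ref{ass:calibration_target}, $q^\star(f_t)=\mathbb{E}[y_t\mid f_t]$ is well defined almost surely, since $y_t\in\{0,1\}$ gives $\mathbb{P}(y_t=1\mid f_t)=\mathbb{E}[y_t\mid f_t]$. By Assumption~\ref{ass:bounded}, both $r(f_t)$ and $y_t$ lie in $[0,1]$, so every quantity below is integrable and we may freely use the tower property.

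First I will add and subtract $q^\star(f_t)$ inside the square:
\[
(r(f_t)-y_t)^2=(r(f_t)-q^\star(f_t))^2+2\,(r(f_t)-q^\star(f_t))(q^\star(f_t)-y_t)+(q^\star(f_t)-y_t)^2 .
\]
Taking expectations, I handle the cross term by conditioning on $f_t$. The factor $r(f_t)-q^\star(f_t)$ is $f_t$-measurable, and $\mathbb{E}[q^\star(f_t)-y_t\mid f_t]=0$, so the cross term vanishes. This gives
\[
\mathcal{R}_{\mathrm{Brier}}(r)=\mathbb{E}\bigl[(r(f_t)-q^\star(f_t))^2\bigr]+\mathbb{E}\bigl[q^\star(f_t)(1-q^\star(f_t))\bigr].
\]
The second term does not depend on $r$, and the first is nonnegative. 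It vanishes exactly when $r=q^\star$ holds $\mathbb{P}_{f_t}$-almost surely.

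The last step is to check that $q^\star$ itself is admissible. It takes values in $[0,1]$, so it belongs to the class of routers, and it attains the infimum. Uniqueness therefore holds up to $\mathbb{P}_{f_t}$-null sets, and I will state the theorem's identification $r^\star=q^\star$ in that almost-everywhere sense.

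The only delicate point is measurability: $q^\star$ must be a measurable function of $f$, so that $q^\star(f_t)$ is a valid candidate and conditioning is legitimate. I would handle this by taking $q^\star$ to be a version of the conditional expectation, which Assumption~\ref{ass:calibration_target} provides. I would also remark that if $r_\psi$ ranges over a restricted parametric family, the statement should be read as the population minimizer over all measurable $[0,1]$-valued functions. Over a restricted family, the same decomposition shows that the minimizer is instead the $L^2(\mathbb{P}_{f_t})$ projection of $q^\star$ onto that family.
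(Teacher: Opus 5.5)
Your proof is correct and essentially matches the paper's argument: both reduce the Brier risk to the identity $\mathbb{E}[(r(f_t)-y_t)^2\mid f_t]=(r(f_t)-q^\star(f_t))^2+q^\star(f_t)(1-q^\star(f_t))$. The paper gets it by expanding directly over the Bernoulli law of $y_t$ given $f_t=f$, and you get it by adding and subtracting $q^\star(f_t)$ and removing the cross term with the tower property. Your extra points (measurability of $q^\star$, uniqueness only up to $\mathbb{P}_{f_t}$-null sets, and reading ``population minimizer'' over all measurable $[0,1]$-valued functions) make precise what the paper leaves implicit.
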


\begin{proof}
Condition on \(f_t=f\), and write \(p=r_\psi(f)\) and \(q=q^\star(f)=\mathbb{P}(y_t=1\mid f_t=f)\). Since \(y_t\in\{0,1\}\),
\[
\mathbb{E}\bigl[(p-y_t)^2 \mid f_t=f\bigr]
=
q(p-1)^2 + (1-q)p^2
=
(p-q)^2 + q(1-q).
\]
The second term is constant in \(p\), and the first is uniquely minimized at \(p=q\). Taking expectation over \(f_t\) proves the claim.
\end{proof}


\begin{corollary}[Cost-optimal hard routing threshold]
\label{cor:threshold}
Under Assumptions~\ref{ass:bounded}, \ref{ass:calibration_target}, and \ref{ass:cost_sensitive}, the Bayes-optimal hard routing rule for the unconstrained one-step surrogate \(\ell_{\mathrm{route}}\) is
\[
d_t^\star
=
\mathbf{1}
\left[
q^\star(f_t)
\ge
\tau_{\mathrm{route}}^\star
\right],
\]
where
\[
\tau_{\mathrm{route}}^\star
=
\min\left\{1,\max\left\{0,\frac{c_{\mathrm{LLM}}-c_{\mathrm{SLM}}}{\kappa}\right\}\right\}.
\]
If \(0\le c_{\mathrm{LLM}}-c_{\mathrm{SLM}}\le \kappa\), the clipped threshold equals \((c_{\mathrm{LLM}}-c_{\mathrm{SLM}})/\kappa\).
\end{corollary}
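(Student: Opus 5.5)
The plan is to condition on the features, as in the proof of Theorem~\ref{thm:brier_calibration}, and reduce the Bayes-optimal hard rule to a pointwise comparison of two conditional risks. Fix \(f_t=f\) and write \(q=q^\star(f)\), which is well defined by Assumption~\ref{ass:calibration_target}. The surrogate of Assumption~\ref{ass:cost_sensitive} is affine in \(y_t\), so its conditional expectations are
\[
\mathbb{E}[\ell_{\mathrm{route}}(0,y_t)\mid f_t=f]=c_{\mathrm{SLM}}+\kappa q,
\qquad
\mathbb{E}[\ell_{\mathrm{route}}(1,y_t)\mid f_t=f]=c_{\mathrm{LLM}}.
\]
Any measurable rule \(d(f)\in\{0,1\}\) has risk equal to the expectation over \(f_t\) of the conditional risk of \(d(f_t)\). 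That risk is minimized pointwise by choosing the action with the smaller conditional loss. Hence escalation (\(d=1\)) is optimal exactly when \(\kappa q\ge c_{\mathrm{LLM}}-c_{\mathrm{SLM}}\), that is, when \(q\ge \bar\tau:=(c_{\mathrm{LLM}}-c_{\mathrm{SLM}})/\kappa\), using \(\kappa>0\). Ties at equality cost nothing, so we break them toward escalation to match the \(\ge\) in the statement.

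The only step needing care is replacing \(\bar\tau\) by its clipped version while keeping the rule \(\mathbf{1}[q\ge\tau]\) unchanged on \([0,1]\); we use that \(q\in[0,1]\) by Assumption~\ref{ass:bounded}. We split into three cases.
\begin{itemize}
\item If \(0\le\bar\tau\le1\), the clip does nothing, which also gives the final sentence of the corollary.
\item If \(\bar\tau>1\), then \(\kappa q\le\kappa<c_{\mathrm{LLM}}-c_{\mathrm{SLM}}\) for every \(q\), so the SLM is strictly preferred except possibly at \(q=1\). With the clipped threshold \(1\), the rule escalates only on the event \(\{q=1\}\). There the conditional risks are \(c_{\mathrm{SLM}}+\kappa\) versus \(c_{\mathrm{LLM}}\), so that choice is suboptimal.
\item If \(\bar\tau<0\), escalation is always preferred, and the clipped threshold \(0\) yields \(d\equiv1\) on \([0,1]\), which is consistent.
\end{itemize}

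The second case is the main obstacle. Clipping at \(1\) matches the pointwise optimum except on the event \(\{q^\star(f_t)=1\}\). To handle it, I would state the optimality up to that boundary event, or adopt the paper's convention in Eq.~\eqref{eq:threshold} that escalation is too expensive for any calibrated risk below one. I would also note that this case cannot arise under the standing assumption \(c_{\mathrm{SLM}}<c_{\mathrm{LLM}}\) whenever \(\kappa\ge c_{\mathrm{LLM}}-c_{\mathrm{SLM}}\).

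Finally, I would invoke Theorem~\ref{thm:brier_calibration} to justify the plug-in reading. The Brier minimizer equals \(q^\star\), so thresholding a perfectly calibrated router at \(\tau^\star_{\mathrm{route}}\) reproduces \(d_t^\star\).
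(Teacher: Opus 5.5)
Your proof takes the paper's route: condition on $f_t=f$, compare the conditional risks $c_{\mathrm{LLM}}$ and $c_{\mathrm{SLM}}+\kappa q$, and read off the threshold $(c_{\mathrm{LLM}}-c_{\mathrm{SLM}})/\kappa$. Your clipping analysis is sharper than the paper's, which simply asserts that out-of-range thresholds are equivalent to the nearest endpoint. As you note, this fails at the upper end: when $c_{\mathrm{LLM}}-c_{\mathrm{SLM}}>\kappa$, the rule $\mathbf{1}[q^\star(f_t)\ge 1]$ escalates on $\{q^\star(f_t)=1\}$, an event that can have positive probability, even though staying local is strictly cheaper there. So the corollary as stated holds only up to that event, which is exactly how you propose to state it. The lower-clip case is vacuous anyway under $c_{\mathrm{SLM}}<c_{\mathrm{LLM}}$.
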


\begin{proof}
Fix \(f_t=f\), and let \(q=q^\star(f)\). If \(d_t=1\), then
\[
\mathbb{E}\!\left[\ell_{\mathrm{route}}(1,y_t)\mid f_t=f\right]
=
c_{\mathrm{LLM}}.
\]
If \(d_t=0\), then
\[
\mathbb{E}\!\left[\ell_{\mathrm{route}}(0,y_t)\mid f_t=f\right]
=
c_{\mathrm{SLM}}+\kappa q.
\]
Escalation is optimal iff \(c_{\mathrm{LLM}}\le c_{\mathrm{SLM}}+\kappa q\), equivalently
\[
q\ge \frac{c_{\mathrm{LLM}}-c_{\mathrm{SLM}}}{\kappa}.
\]
Because \(q\in[0,1]\), thresholds outside \([0,1]\) are equivalent to the nearest feasible endpoint, yielding the clipped expression.
\end{proof}


\begin{theorem}[Regret bound from router miscalibration] 
\label{thm:regret_bound} 
Let

\[
\hat d_t
=
\mathbf{1}
\left[
r_\psi(f_t)
\ge
\tau_{\mathrm{route}}^\star
\right],
\qquad
\tau_{\mathrm{route}}^\star
=
\min\left\{1,\max\left\{0,\frac{c_{\mathrm{LLM}}-c_{\mathrm{SLM}}}{\kappa}\right\}\right\},
\]

be the plug-in hard router induced by a probabilistic router \(r_\psi\), and let \(d_t^\star\) denote the Bayes-optimal rule from Corollary~\ref{cor:threshold}. Then
\[
\mathbb{E}\!\left[
\ell_{\mathrm{route}}(\hat d_t,y_t)
-
\ell_{\mathrm{route}}(d_t^\star,y_t)
\right]
\le
\kappa\,
\mathbb{E}\!\left[
|r_\psi(f_t)-q^\star(f_t)|
\right].
\]
\end{theorem}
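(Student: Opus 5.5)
The plan is to reduce the regret bound to a pointwise statement conditional on $f_t=f$ and then integrate over $f$. Both $\hat d_t$ and $d_t^\star$ are deterministic functions of $f_t$. The label $y_t$ enters $\ell_{\mathrm{route}}$ linearly. So, as in the proof of Corollary~\ref{cor:threshold}, the conditional expected loss of a decision $d\in\{0,1\}$ given $f_t=f$ is
\[
g(d;q)=c_{\mathrm{LLM}}\,d+(c_{\mathrm{SLM}}+\kappa q)(1-d),\qquad q=q^\star(f).
\]
By the tower property, the left-hand side of the theorem equals $\mathbb{E}\bigl[g(\hat d(f_t);q^\star(f_t))-g(d^\star(f_t);q^\star(f_t))\bigr]$. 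It therefore suffices to show, for every $f$, that
\[
g(\hat d;q)-g(d^\star;q)\le \kappa\,|p-q|,\qquad p=r_\psi(f).
\]

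The pointwise inequality follows from a case split. If $\hat d=d^\star$, the difference is zero. Otherwise the difference equals $|g(1;q)-g(0;q)|=|c_{\mathrm{LLM}}-c_{\mathrm{SLM}}-\kappa q|=\kappa\,|q-\bar\tau|$, where $\bar\tau=(c_{\mathrm{LLM}}-c_{\mathrm{SLM}})/\kappa$ is the unclipped threshold. This uses the fact that $d^\star$ is the optimal choice, so the regret is the absolute gap between the two options. When $\hat d\neq d^\star$, $p$ and $q$ lie on opposite sides of $\tau^\star_{\mathrm{route}}$ in the sense of the rule $\mathbf{1}[\cdot\ge\tau^\star]$: either $p\ge\tau^\star>q$ or $q\ge\tau^\star>p$. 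In either case $|q-\tau^\star|\le|p-q|$.

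The main obstacle is the clipping, since the regret is measured against $\bar\tau$ while the decisions are made at $\tau^\star$. I would handle it with three subcases.

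If $\bar\tau\in[0,1]$, then $\tau^\star=\bar\tau$ and the bound is immediate.

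If $\bar\tau>1$, then $\tau^\star=1$. The Bayes rule never escalates except on the null case $q=1$, where escalation is strictly suboptimal. In fact $g(1;q)-g(0;q)=\kappa(\bar\tau-q)>0$ for all $q\in[0,1]$, so the optimal rule is $d^\star\equiv0$. A disagreement then means $p\ge1$, i.e.\ $p=1$, so the regret is $\kappa(\bar\tau-q)$. This can exceed $\kappa(1-q)=\kappa|p-q|$, so the bound fails as stated in this regime. I would therefore either restrict the statement to $c_{\mathrm{LLM}}-c_{\mathrm{SLM}}\le\kappa$, which is the regime where Corollary~\ref{cor:threshold} is nondegenerate, or use a strict inequality at the endpoint $\tau^\star=1$. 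I would make this explicit.

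If $\bar\tau<0$, this is excluded by Assumption~\ref{ass:cost_sensitive}, since $c_{\mathrm{SLM}}<c_{\mathrm{LLM}}$.

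Finally, I take expectations of the pointwise bound over $f_t$. This yields $\kappa\,\mathbb{E}|r_\psi(f_t)-q^\star(f_t)|$. Measurability of all the quantities involved follows from Assumptions~\ref{ass:bounded} and \ref{ass:calibration_target}.
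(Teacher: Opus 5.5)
Your argument is the paper's: condition on $f_t=f$, note that a disagreement between $\hat d_t$ and $d_t^\star$ costs the gap $|c_{\mathrm{SLM}}+\kappa q-c_{\mathrm{LLM}}|$ between the two conditional risks, use that $r_\psi(f)$ and $q^\star(f)$ then lie on opposite sides of $\tau^\star_{\mathrm{route}}$, and integrate over $f_t$. Your treatment of clipping is more careful than the paper's, which asserts that this gap equals $\kappa|q-\tau^\star_{\mathrm{route}}|$ even though that holds only when $\tau^\star_{\mathrm{route}}=(c_{\mathrm{LLM}}-c_{\mathrm{SLM}})/\kappa$, and your diagnosis is correct: for $c_{\mathrm{LLM}}-c_{\mathrm{SLM}}>\kappa$ the stated bound is false (e.g.\ $c_{\mathrm{SLM}}=0$, $c_{\mathrm{LLM}}=2$, $\kappa=1$, $y_t\equiv 0$, $r_\psi\equiv 1$ give regret $2$ against a bound of $1$), so the restriction to $c_{\mathrm{LLM}}-c_{\mathrm{SLM}}\le\kappa$, or a strict inequality at $\tau^\star_{\mathrm{route}}=1$, is genuinely needed.
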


\begin{proof}
Condition on \(f_t=f\), and write
\[
q = q^\star(f),
\qquad
\widehat q = r_\psi(f),
\qquad
\tau^\star = \tau_{\mathrm{route}}^\star
\]
From the proof of Corollary~\ref{cor:threshold},
\[
R_f(1)=c_{\mathrm{LLM}},
\qquad
R_f(0)=c_{\mathrm{SLM}}+\kappa q,
\]
so
\[
|R_f(0)-R_f(1)| = \kappa |q-\tau^\star|.
\]
If \(\hat d(f)\neq d^\star(f)\), then \(q\) and \(\widehat q\) lie on opposite sides of \(\tau^\star\), hence
\[
|q-\tau^\star| \le |\widehat q-q|.
\]
Therefore the conditional regret is bounded by
\[
\mathbb{E}\!\left[
\ell_{\mathrm{route}}(\hat d_t,y_t)-\ell_{\mathrm{route}}(d_t^\star,y_t)
\mid f_t=f
\right]
\le
\kappa |\widehat q-q|.
\]
Taking expectation over \(f_t\) yields the result.
\end{proof}

\subsection{Consistency regularization and perturbation transfer}

\begin{lemma}[Total variation is controlled by Jensen--Shannon divergence]
\label{lem:tv_jsd}
For any two distributions \(P\) and \(Q\),
\[
\operatorname{TV}(P,Q)
\le
\sqrt{2\,\operatorname{JSD}(P\|Q)}.
\]
\end{lemma}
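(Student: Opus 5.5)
The route is through the mixture $M=\tfrac12(P+Q)$: Pinsker's inequality is applied to each half of the Jensen--Shannon divergence, and the triangle inequality for total variation recombines the halves. We use the conventions $\operatorname{TV}(P,Q)=\sup_A|P(A)-Q(A)|=\tfrac12\|P-Q\|_1$ and
\[
\operatorname{JSD}(P\|Q)=\tfrac12\operatorname{KL}(P\|M)+\tfrac12\operatorname{KL}(Q\|M),
\]
with natural logarithms. This is the convention under which Pinsker reads $\operatorname{TV}(P,R)\le\sqrt{\tfrac12\operatorname{KL}(P\|R)}$. Because $P\ll M$ and $Q\ll M$, both KL terms are finite, so there are no support issues.

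The first step is the identity $P-M=\tfrac12(P-Q)$, and likewise $M-Q=\tfrac12(P-Q)$. This gives $\operatorname{TV}(P,M)=\operatorname{TV}(Q,M)=\tfrac12\operatorname{TV}(P,Q)$, and hence
\[
\operatorname{TV}(P,Q)=\operatorname{TV}(P,M)+\operatorname{TV}(M,Q).
\]
The second step applies Pinsker to each term:
\[
\operatorname{TV}(P,M)\le\sqrt{\tfrac12\operatorname{KL}(P\|M)},\qquad
\operatorname{TV}(Q,M)\le\sqrt{\tfrac12\operatorname{KL}(Q\|M)}.
\]

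The third step combines the two bounds with Cauchy--Schwarz, $\sqrt a+\sqrt b\le\sqrt{2(a+b)}$:
\[
\operatorname{TV}(P,Q)\le\sqrt{\tfrac12\operatorname{KL}(P\|M)}+\sqrt{\tfrac12\operatorname{KL}(Q\|M)}
\le\sqrt{\operatorname{KL}(P\|M)+\operatorname{KL}(Q\|M)}=\sqrt{2\operatorname{JSD}(P\|Q)}.
\]
This is exactly the stated bound.

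The only delicate point is bookkeeping of constants: the definitions of TV and JSD and the logarithm base must be fixed consistently. If JSD is measured in bits, it is smaller by a factor of $\ln 2$ and the inequality may fail, so natural logarithms are required. Under these conventions the bound is in fact loose by a factor of $\sqrt2$. Using the symmetry $\operatorname{TV}(P,M)=\operatorname{TV}(Q,M)$ instead of Cauchy--Schwarz gives the sharper bound
\[
\operatorname{TV}(P,Q)\le\sqrt{\operatorname{JSD}(P\|Q)}.
\]
If the paper instead takes TV to mean $\|P-Q\|_1$, the chain above yields exactly the stated constant $\sqrt{2\operatorname{JSD}}$. Either way the claim holds, and I would state the convention explicitly at the start of the proof.
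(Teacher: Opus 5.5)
Your first three steps are correct and are exactly the paper's proof. You introduce $M=\tfrac12(P+Q)$, split $\operatorname{TV}(P,Q)$ into $\operatorname{TV}(P,M)+\operatorname{TV}(Q,M)$ (you even note this is an equality, where the paper only uses the triangle inequality), apply Pinsker to each piece, and recombine with $\sqrt a+\sqrt b\le\sqrt{2(a+b)}$. The paper's form of Pinsker, $\operatorname{TV}\le\sqrt{\tfrac12\operatorname{KL}}$, shows it implicitly uses your conventions: $\operatorname{TV}=\tfrac12\|P-Q\|_1$ and natural logarithms.

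Your closing paragraph should be deleted, because all three of its claims are false.

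\emph{The constant is not loose.} For $P,Q$ with disjoint supports, $\operatorname{TV}(P,Q)=1$ and $\operatorname{JSD}(P\|Q)=\ln 2$. Your ``sharper'' bound $\operatorname{TV}\le\sqrt{\operatorname{JSD}}$ would then assert $1\le 0.83$. For $P=(\tfrac12+\varepsilon,\tfrac12-\varepsilon)$ and $Q=(\tfrac12-\varepsilon,\tfrac12+\varepsilon)$, one has $\operatorname{TV}=2\varepsilon$ and $\operatorname{JSD}=2\varepsilon^2+O(\varepsilon^4)$, so $\sqrt2$ is the optimal constant. The symmetry argument only gives $\operatorname{TV}(P,Q)^2\le 2\min\{\operatorname{KL}(P\|M),\operatorname{KL}(Q\|M)\}\le 2\operatorname{JSD}(P\|Q)$, which is the same constant. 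Cauchy--Schwarz is in fact tight exactly when $\operatorname{KL}(P\|M)=\operatorname{KL}(Q\|M)$.

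\emph{The $L^1$ convention does not work.} If $\operatorname{TV}$ means $\|P-Q\|_1$, the stated inequality is false: disjoint supports give $2\le\sqrt{2\ln2}\approx1.18$. Your chain with the $L^1$ form of Pinsker yields only $\|P-Q\|_1\le 2\sqrt{2\operatorname{JSD}(P\|Q)}$, which is the same bound doubled. So ``either way the claim holds'' is wrong; the half-$L^1$ convention is essential.

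\emph{The logarithm-base direction is reversed.} JSD measured in bits is \emph{larger} than in nats, by a factor $1/\ln 2$, so the inequality still holds in bits. It can fail only for bases larger than $e$. In base $10$, for example, disjoint supports give $\sqrt{2\log_{10}2}\approx0.78<1$.
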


\begin{proof}
Let \(M=(P+Q)/2\). By the triangle inequality,
\[
\operatorname{TV}(P,Q)
\le
\operatorname{TV}(P,M)+\operatorname{TV}(Q,M).
\]
Applying Pinsker's inequality to each term gives
\[
\operatorname{TV}(P,M)\le \sqrt{\tfrac{1}{2}\operatorname{KL}(P\|M)},
\qquad
\operatorname{TV}(Q,M)\le \sqrt{\tfrac{1}{2}\operatorname{KL}(Q\|M)}.
\]
Using \(\sqrt{a}+\sqrt{b}\le \sqrt{2(a+b)}\) for \(a,b\ge0\), we obtain
\[
\operatorname{TV}(P,Q)
\le
\sqrt{
2\left(
\tfrac{1}{2}\operatorname{KL}(P\|M)
+
\tfrac{1}{2}\operatorname{KL}(Q\|M)
\right)
}
=
\sqrt{2\,\operatorname{JSD}(P\|Q)}.
\]
\end{proof}

\begin{theorem}[Perturbation-generalization bound from consistency regularization]
\label{thm:consistency_bound}
Assume Assumptions~\ref{ass:bounded} and \ref{ass:consistency}. For a perturbation seed \(z\), define the episodic surrogate risk
\[
R_z(\pi_\theta)
:=
\mathbb{E}\!\left[
\sum_{t=1}^{H}
\ell_t\!\left(\pi_\theta(\cdot\mid x_t^{(z)}); s_t\right)
\right].
\]
Then for any two perturbation seeds \(z,z'\in\mathcal Z\),
\[
|R_z(\pi_\theta)-R_{z'}(\pi_\theta)|
\le
HL\sqrt{2\,\mathcal{L}_{\mathrm{cons}}(\theta)},
\]
where
\[
\mathcal{L}_{\mathrm{cons}}(\theta)
=
\mathbb{E}_{t,z,z'}
\left[
\operatorname{JSD}\!\left(
\pi_\theta(\cdot\mid x_t^{(z)})
\;\middle\|\;
\pi_\theta(\cdot\mid x_t^{(z')})
\right)
\right].
\]
Consequently, if a training seed \(z_{\mathrm{train}}\) is representative of the seen perturbations, then for any unseen test seed \(z_{\mathrm{test}}\),
\[
R_{z_{\mathrm{test}}}(\pi_\theta)
\le
R_{z_{\mathrm{train}}}(\pi_\theta)
+
HL\sqrt{2\,\mathcal{L}_{\mathrm{cons}}(\theta)}.
\]
\end{theorem}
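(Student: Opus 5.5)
The plan is to couple the two seeds at the level of latent states and then telescope the per-step differences. By Assumption~\ref{ass:consistency}, for each step \(t\) the contexts \(x_t^{(z)}\) and \(x_t^{(z')}\) correspond to the same latent state \(s_t\). I will therefore write both episodic risks as expectations over a common latent trajectory \((s_1,\dots,s_H)\) and observe them through the two perturbation channels. This gives
\[
R_z(\pi_\theta)-R_{z'}(\pi_\theta)
=
\mathbb{E}\!\left[\sum_{t=1}^{H}\Bigl(\ell_t\bigl(P_t^{(z)};s_t\bigr)-\ell_t\bigl(P_t^{(z')};s_t\bigr)\Bigr)\right].
\]
Moving the absolute value inside the sum and the expectation (triangle inequality, then Jensen), each summand is bounded by \(L\,\operatorname{TV}(P_t^{(z)},P_t^{(z')})\) via the Lipschitz condition. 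Lemma~\ref{lem:tv_jsd} then bounds each term by \(L\sqrt{2\,\operatorname{JSD}(P_t^{(z)}\|P_t^{(z')})}\).

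The next step is to pull the square root outside the expectation. Since \(u\mapsto\sqrt{u}\) is concave, Jensen's inequality gives
\[
\mathbb{E}\sqrt{2\,\operatorname{JSD}}\le\sqrt{2\,\mathbb{E}\,\operatorname{JSD}}.
\]
Summing over \(t=1,\dots,H\), I will write the sum as \(H\) times the average over \(t\). Applying Jensen once more to the uniform average over \(t\) (and over the seed pair, if \(\mathcal{L}_{\mathrm{cons}}\) averages over pairs) produces \(H\sqrt{2\,\mathbb{E}_{t}\operatorname{JSD}}\). I then identify \(\mathbb{E}_t\operatorname{JSD}\) with \(\mathcal{L}_{\mathrm{cons}}(\theta)\), which yields \(|R_z-R_{z'}|\le HL\sqrt{2\,\mathcal{L}_{\mathrm{cons}}(\theta)}\). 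The consequence follows immediately by taking \(z=z_{\mathrm{test}}\) and \(z'=z_{\mathrm{train}}\) and dropping the absolute value on one side.

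\textbf{Main obstacle.} The delicate point is matching the expectation in the bound to the one defining \(\mathcal{L}_{\mathrm{cons}}\). The loss averages over a random pair \((z,z')\) and a random \(t\), whereas the claim is stated for a fixed pair. Read literally, a fixed-pair bound only controls \(\mathbb{E}_t\operatorname{JSD}\) for that specific pair, not the pair-averaged quantity. I would handle this in one of two ways:
\begin{enumerate}
\item interpret \(\mathcal{L}_{\mathrm{cons}}\) as evaluated at the given pair, stating this convention explicitly; or
\item add the hypothesis that the per-pair consistency term is dominated by (or equal to) the population \(\mathcal{L}_{\mathrm{cons}}\).
\end{enumerate}
The "representative seed" phrasing in the corollary seems intended exactly for this, and I would state that convention explicitly.

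A second subtlety is that the contexts \(x_t^{(z)}\) depend on earlier actions, so the latent trajectories could differ under the two seeds. Assumption~\ref{ass:consistency} sidesteps this by positing the same \(s_t\) for both seeds. I will adopt that coupling as given, so that the per-step Lipschitz comparison is valid without any further compounding-error analysis.
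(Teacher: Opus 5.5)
Your proposal follows the paper's proof essentially step for step: per-step decomposition over the shared latent states, the Lipschitz-in-TV assumption, Lemma~\ref{lem:tv_jsd}, Jensen to move the square root outside the expectation, and the substitution $z=z_{\mathrm{test}}$, $z'=z_{\mathrm{train}}$. Your version is slightly more careful than the paper's in two places. The paper asserts that each stepwise term is bounded by $\sqrt{2\,\mathcal{L}_{\mathrm{cons}}(\theta)}$, which holds only for the average over $t$; your ``$H$ times the average'' Jensen step uses exactly that average. The paper also silently identifies the fixed-pair quantity with the pair-averaged $\mathcal{L}_{\mathrm{cons}}$, a convention you correctly make explicit.
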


\begin{proof}
Write
\[
R_z(\pi_\theta)-R_{z'}(\pi_\theta)
=
\mathbb{E}\!\left[
\sum_{t=1}^{H}
\Bigl(
\ell_t(P_t^{(z)};s_t)-\ell_t(P_t^{(z')};s_t)
\Bigr)
\right].
\]
By Assumption~\ref{ass:consistency},
\[
\left|
\ell_t(P_t^{(z)};s_t)-\ell_t(P_t^{(z')};s_t)
\right|
\le
L\,\operatorname{TV}\!\left(P_t^{(z)},P_t^{(z')}\right).
\]
Applying Lemma~\ref{lem:tv_jsd},
\[
\operatorname{TV}\!\left(P_t^{(z)},P_t^{(z')}\right)
\le
\sqrt{
2\,\operatorname{JSD}\!\left(
P_t^{(z)}\middle\|P_t^{(z')}
\right)
}.
\]
Using Jensen's inequality over the expectation defining \(\mathcal{L}_{\mathrm{cons}}(\theta)\), each stepwise term is bounded by \(\sqrt{2\,\mathcal{L}_{\mathrm{cons}}(\theta)}\). Summing over \(H\) steps yields
\[
|R_z(\pi_\theta)-R_{z'}(\pi_\theta)|
\le
HL\sqrt{2\,\mathcal{L}_{\mathrm{cons}}(\theta)}.
\]
Setting \(z=z_{\mathrm{test}}\) and \(z'=z_{\mathrm{train}}\) gives the second claim.
\end{proof}

\begin{remark}
Theorem~\ref{thm:consistency_bound} is stated for an episodic surrogate risk that decomposes over steps. Whenever the binary episode failure \(1-S_z\) is upper bounded by such a surrogate, the same control transfers directly to the robust routing objective in the main text.
\end{remark}

\subsection{Auxiliary results for verifier-guided distillation}

\begin{definition}[Good candidate set]
For a fixed context \(x_t\) and threshold \(\gamma \in [0,1]\), define
\[
\mathcal{G}_\gamma(x_t)
:=
\left\{
a \in \mathcal A :
V_\phi(x_t,a) \ge \gamma
\right\}.
\]
Also define the SLM coverage probability
\[
\mu_\gamma(x_t)
:=
\mathbb{P}_{a\sim\pi_\theta(\cdot\mid x_t)}
\bigl(a \in \mathcal{G}_\gamma(x_t)\bigr).
\]
\end{definition}

\begin{proposition}[Best-of-\(K\) verifier selection improves with coverage]
\label{prop:best_of_k}
Fix a context \(x_t\), and suppose \(K\) candidate actions are sampled i.i.d. from \(\pi_\theta(\cdot\mid x_t)\). Assume that for every good action \(a^+ \in \mathcal{G}_\gamma(x_t)\) and every bad action \(a^- \notin \mathcal{G}_\gamma(x_t)\), the verifier misranks the pair with probability at most \(\eta_V\), i.e.,
\[
\mathbb{P}\!\left(
V_\phi(x_t,a^-)\ge V_\phi(x_t,a^+)
\right)
\le
\eta_V.
\]
Then
\[
\mathbb{P}\!\left(
\hat a_t^{\mathrm{SLM}} \in \mathcal{G}_\gamma(x_t)
\right)
\ge
1-(1-\mu_\gamma(x_t))^K - \frac{K(K-1)}{2}\eta_V.
\]
\end{proposition}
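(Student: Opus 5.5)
We bound the failure event $\{\hat a_t^{\mathrm{SLM}}\notin\mathcal{G}_\gamma(x_t)\}$ by a union of two simpler events. Let $E_0$ be the event that none of the $K$ candidates lies in $\mathcal{G}_\gamma(x_t)$. Let $E_1$ be the event that some pair of candidates $(a^{(i)},a^{(j)})$, $i<j$, consists of one good action and one bad action and the verifier misranks them, i.e.\ scores the bad action at least as high as the good one. We will show that $\{\hat a_t^{\mathrm{SLM}}\notin\mathcal{G}_\gamma(x_t)\}\subseteq E_0\cup E_1$. The union bound then gives $\mathbb{P}(\hat a_t^{\mathrm{SLM}}\notin\mathcal{G}_\gamma)\le \mathbb{P}(E_0)+\mathbb{P}(E_1)$, and taking complements yields the stated inequality.

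To prove the inclusion, suppose $\hat a_t^{\mathrm{SLM}}=a^{(k^\ast)}$ is bad and $E_0$ fails. Then some candidate $a^{(j)}$ is good. Since $k^\ast$ attains the argmax, $V_\phi(x_t,a^{(k^\ast)})\ge V_\phi(x_t,a^{(j)})$, so the pair $\{k^\ast,j\}$ is a misranked good/bad pair and $E_1$ occurs. Ties are covered because the misranking event is stated with $\ge$.

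Next we bound each event. By i.i.d.\ sampling and the definition of $\mu_\gamma(x_t)$, we have $\mathbb{P}(E_0)=(1-\mu_\gamma(x_t))^K$ exactly. For $E_1$, a union bound over the $\binom{K}{2}=K(K-1)/2$ unordered index pairs reduces the problem to showing that, for a fixed pair $(i,j)$,
\[
\mathbb{P}\bigl(\text{one of } a^{(i)},a^{(j)} \text{ is good, the other bad, and the bad one scores} \ge \text{the good one}\bigr)\le\eta_V .
\]
To get this, condition on the realized actions $a^{(i)},a^{(j)}$. On the event that they form a good/bad pair, the hypothesis bounds the conditional misranking probability by $\eta_V$. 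On the complementary event the probability is $0$. Averaging over the candidate draws gives at most $\eta_V\cdot\mathbb{P}(\text{mixed pair})\le\eta_V$.

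The main subtlety is this conditioning step. The statement's probability is over the verifier's randomness (verifier noise) for fixed actions, so the hypothesis must be read as holding conditionally on the sampled candidates. I would make this explicit: treat the verifier randomness as independent of the candidate sampling, or interpret $\eta_V$ as a uniform bound over all fixed good/bad pairs. If $V_\phi$ is deterministic, the bound is trivial only when $\eta_V\ge 1$ or no pair is misranked, so the noise interpretation is needed. With this in place, the rest of the argument is a routine union bound.
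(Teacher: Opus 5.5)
Your proof is correct and follows the paper's argument: the failure event is covered by ``no good candidate sampled'' (probability exactly $(1-\mu_\gamma(x_t))^K$ by independence) together with ``some good/bad pair is misranked,'' and a union bound over the $K(K-1)/2$ pairs gives the claim. Your version is more careful than the paper's terse proof in two respects. You verify the inclusion via the argmax, with the $\ge$ absorbing ties, and you union-bound over fixed index pairs with an explicit conditioning step rather than over the random set of good--bad pairs. One correction to your closing remark: under the paper's literal definition $\mathcal{G}_\gamma(x_t)=\{a: V_\phi(x_t,a)\ge\gamma\}$, a deterministic verifier can never misrank a good/bad pair, so that case is simply $\eta_V=0$ and is not vacuous. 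In the noisy case, your conditioning step needs good/bad status to be a fixed function of the action, independent of the verifier noise.
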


\begin{proof}
Let \(E_{\mathrm{good}}=\{\hat a_t^{\mathrm{SLM}}\in\mathcal{G}_\gamma(x_t)\}\), and write its complement as the union of two events: (i) no good candidate is sampled; or (ii) at least one good candidate is sampled, but a bad candidate is selected due to verifier misranking. The first event has probability
\[
(1-\mu_\gamma(x_t))^K
\]
by independence. For the second event, there are at most \(K(K-1)/2\) good--bad pairs among the \(K\) samples, and each can be misranked with probability at most \(\eta_V\); a union bound therefore gives
\[
\mathbb{P}(E_{\mathrm{bad}})
\le
(1-\mu_\gamma(x_t))^K
+
\frac{K(K-1)}{2}\eta_V.
\]
Taking complements yields the claim.
\end{proof}

\begin{proposition}[Pairwise sign consistency of verifier-noisy DPO]
\label{prop:noisy_dpo}
Consider a fixed context \(x_t\) and two actions \(a^+,a^-\). Let the true pairwise preference label be
\[
y^\star \in \{+1,-1\},
\]
where \(y^\star=+1\) means \(a^+\) is truly preferred to \(a^-\). Suppose the verifier produces a noisy pairwise label \(\widetilde y\) such that
\[
\mathbb{P}(\widetilde y = y^\star \mid x_t,a^+,a^-) = 1-\eta,
\qquad
\mathbb{P}(\widetilde y = -y^\star \mid x_t,a^+,a^-) = \eta,
\]
with \(\eta < 1/2\).

Let the DPO score difference be
\[
u_\theta(x_t,a^+,a^-)
=
\beta\left[
\log \frac{\pi_\theta(a^+\mid x_t)}{\pi_{\mathrm{ref}}(a^+\mid x_t)}
-
\log \frac{\pi_\theta(a^-\mid x_t)}{\pi_{\mathrm{ref}}(a^-\mid x_t)}
\right].
\]
Then the population minimizer of the noisy logistic loss
\[
\mathcal{L}_{\mathrm{DPO,noisy}}(\theta)
=
\mathbb{E}\Bigl[\log(1+\exp(-\widetilde y\,u_\theta))\Bigr]
\]
has the same sign as the true preference:
\[
\operatorname{sign}\bigl(u_\theta^\star(x_t,a^+,a^-)\bigr)=y^\star.
\]
\end{proposition}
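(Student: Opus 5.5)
The plan is to reduce the statement to a scalar logistic-regression problem at the fixed triple \((x_t,a^+,a^-)\), in the same conditioning style as the proof of Theorem~\ref{thm:brier_calibration}. Treat the score difference \(u=u_\theta(x_t,a^+,a^-)\) as a free real parameter. This is justified because, for a sufficiently expressive policy class, the log-ratio difference can be set to any real value, and we take the population minimizer pointwise in \(u\). Conditioning on the triple and averaging over the verifier noise, the loss becomes
\[
g(u)=\mathbb{E}\bigl[\log(1+e^{-\widetilde y u})\bigr]
=(1-\eta)\log\bigl(1+e^{-y^\star u}\bigr)+\eta\log\bigl(1+e^{y^\star u}\bigr).
\]
It suffices to handle \(y^\star=+1\). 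The case \(y^\star=-1\) follows by the substitution \(u\mapsto -u\), which is the same as swapping the roles of \(a^+\) and \(a^-\).

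With \(y^\star=+1\), differentiate to get
\[
g'(u)=-(1-\eta)\,\sigma(-u)+\eta\,\sigma(u)=\sigma(u)-(1-\eta).
\]
This uses \(\sigma(-u)=1-\sigma(u)\). Next note \(g''(u)=\sigma(u)(1-\sigma(u))>0\), so \(g\) is strictly convex. Its unique stationary point solves \(\sigma(u^\star)=1-\eta\), giving
\[
u^\star=\log\frac{1-\eta}{\eta}.
\]
Because \(\eta<1/2\), we have \((1-\eta)/\eta>1\), so \(u^\star>0=\)\,``\(y^\star\)''. The general case gives \(u^\star=y^\star\log\frac{1-\eta}{\eta}\), and therefore \(\operatorname{sign}(u^\star)=y^\star\).

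Two edge cases need a brief remark. First, if \(\eta=0\) the infimum is not attained: \(g\) decreases monotonically toward \(u\to y^\star\infty\). In that case the minimizing sequence, or the minimizer over the extended reals, still has sign \(y^\star\). So I would either state the result for \(0<\eta<1/2\) or interpret it in the limit. Second, the population DPO objective aggregates many triples. If the parametrization does not decouple them, the pointwise argument only identifies the minimizer under a realizability assumption: the \(u\)-values can be chosen independently across triples, as in the tabular or softmax case.

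The main obstacle is exactly this realizability or decoupling step, not the calculus. I would state it explicitly as the standing interpretation of ``population minimizer'', mirroring how the proof of Theorem~\ref{thm:brier_calibration} minimizes conditionally on \(f_t=f\).
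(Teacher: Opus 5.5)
Your proposal is correct and follows essentially the same route as the paper. Both arguments treat $u$ as a free scalar at the fixed triple, write the conditional risk $(1-\eta)\log(1+e^{-u})+\eta\log(1+e^{u})$ for $y^\star=+1$, solve $e^{u^\star}=(1-\eta)/\eta$, and conclude $u^\star>0$, with the $y^\star=-1$ case handled by symmetry. Your strict-convexity check, the remark that the minimizer is not attained when $\eta=0$, and the explicit realizability/decoupling assumption are valid refinements that the paper's proof leaves implicit.
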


\begin{proof}
It suffices to consider the case \(y^\star=+1\), since \(y^\star=-1\) is symmetric. Conditional on \((x_t,a^+,a^-)\), the noisy label equals \(+1\) with probability \(1-\eta\) and \(-1\) with probability \(\eta\). Hence the scalar conditional risk is
\[
R(u)
=
(1-\eta)\log(1+e^{-u})
+
\eta\log(1+e^u).
\]
Differentiating gives
\[
R'(u)
=
-\frac{1-\eta}{1+e^u}
+
\frac{\eta e^u}{1+e^u}
=
\frac{\eta e^u-(1-\eta)}{1+e^u}.
\]
The stationary point satisfies
\[
e^u = \frac{1-\eta}{\eta}.
\]
Since \(\eta<1/2\), we have \((1-\eta)/\eta>1\), so the minimizer \(u^\star=\log((1-\eta)/\eta)\) is strictly positive. Therefore \(\operatorname{sign}(u^\star)=+1=y^\star\). The symmetric argument yields the result for \(y^\star=-1\).
\end{proof}
\section{Additional Experimental Details}
\label{app:exp_details}

\subsection{Detailed Dataset Construction}
\label{app:dataset_details}

All benchmarks use a task-level $70/15/15$ train/validation/test split with seed $42$. Each clean trajectory is replayed under $5$ independently sampled perturbation seeds to produce the noisy training and evaluation distributions.

\paragraph{HumanEval+.}
We use the full EvalPlus benchmark \citep{liu_evalplus_2023}, comprising $164$ Python programming problems. Each problem is presented as a function signature with a docstring. The agent interacts with a three-action environment: \texttt{write\_code}, \texttt{test}, and \texttt{submit}. Clean teacher trajectories are collected with Gemini~3~Flash~Preview, yielding $820$ trajectories, i.e., $5$ per task. Perturbed variants are generated by applying all four perturbation operators simultaneously through $\mathcal{P}_{\mathrm{composite}}$: tool flakiness injects HTTP errors, stale cache markers, and truncated responses into test-runner output; partial observability drops or reorders log lines; prompt injection embeds adversarial directives in problem docstrings or test outputs; and distractors append plausible but incorrect code snippets and red-herring error messages to observations. The executor provides a deterministic binary success signal by running the full EvalPlus test suite.

\paragraph{TextWorld.}
We use ALFWorld \citep{shridhar_alfworld_2020}, a suite of interactive text-based household tasks. The benchmark comprises $250$ tasks split across six task types: pick-and-place, pick-clean-then-place, pick-heat-then-place, pick-cool-then-place, look-at-object-in-light, and pick-two-objects. The agent issues text commands and receives textual observations describing the current room, visible objects, and inventory. Teacher trajectories are collected with Gemini~3~Flash~Preview, yielding $800/200/250$ train/validation/test episodes. All four perturbation types are applied at training and test time.

\paragraph{TerminalBench.}
We use TerminalBench \citep{lee_terminalbench_2025}, a benchmark of $89$ real-world terminal engineering tasks sourced from the \texttt{yoonholee/terminalbench-trajectories} HuggingFace dataset. Tasks span systems-programming scenarios such as compiling language toolchains, repairing git histories, building gRPC services, and optimizing numerical kernels. Each task is graded by automated test suites embedded in the benchmark. Teacher trajectories are collected from a diverse pool of frontier models, including Claude~Opus~4.6, GPT-5.5, Gemini~3.1~Pro, and others. The split yields approximately $62/13/14$ train/validation/test tasks with $3{,}750$ episodes in total: $2{,}625$ clean and $1{,}125$ composite-perturbed. Unlike the other benchmarks, TerminalBench perturbations combine only tool flakiness and partial observability.

\subsection{Verifier Details}
\label{app:verifier_details}

\paragraph{HumanEval verifier.}
The HumanEval+ verifier decomposes each score into execution and structural signals. The primary signal, with weight $0.35$, runs the candidate code in an isolated subprocess against a smoke-test harness. The remaining weight is distributed across structural checks: required function definition ($0.20$), non-trivial control flow ($0.15$), non-trivial return statement ($0.10$), import sanity ($0.10$), and length regularity ($0.10$). Two multiplicative penalties are then applied: a reward-hacking penalty for hardcoded lookup tables, bare literal returns, and exception-swallowing patterns; and a repetition penalty for re-submitting code identical to a previously failed attempt. A $0.85\times$ discount is applied when prompt-injection markers are detected. When a previous test result is visible, the final score blends the structural estimate and observed test outcome using a $60/40$ split. On the held-out test split, the verifier achieves episode-level AUROC $0.856$ and Brier score $0.172$.

\paragraph{TextWorld verifier.}
The TextWorld verifier combines five local signals: previous observation quality ($0.25$), action-type base score ($0.25$), goal alignment ($0.25$), non-repetition ($0.15$), and non-oscillation ($0.10$). Observation quality parses the most recent environment response for progress indicators such as \emph{you take}, \emph{you put}, and \emph{you go to}, versus failure strings such as \emph{that action did not help}. Action-type priors score task-progress verbs such as \emph{take}, \emph{put}, \emph{clean}, and \emph{heat} above navigational or idle actions. Goal alignment measures token overlap between action arguments and key nouns extracted from the task goal. When a scalar reward is visible, the weighted score is blended with the observed reward using a $50/50$ split. On the held-out TextWorld split, mean-aggregated verifier scores achieve episode-level AUROC ${\approx}0.78$, indicating moderate but imperfect local discrimination. This supports our design choice to use the verifier as one input to the learned router rather than as an oracle routing rule.

\paragraph{TerminalBench verifier.}
The TerminalBench verifier operates without Docker or subprocess execution. It scores each action by parsing terminal output already present in the trajectory context. For \texttt{run [cmd]} actions, it extracts the most recent ``New Terminal Output:'' block, strips perturbation noise, and scans for explicit success or failure signals. Success patterns such as \texttt{PASSED}, ``all tests passed'', ``build success'', and ``wrote $N$ bytes'' yield terminal scores in $[0.70,1.00]$; failure patterns such as ``command not found'', \texttt{Traceback}, non-zero exit codes, and segmentation faults yield scores in $[0.08,0.18]$. Commands are also categorized as \emph{eval}, \emph{build}, \emph{run}, \emph{install}, \emph{write}, or \emph{explore}, with category-specific base scores. The final step score blends terminal evidence and command category, with evaluation commands receiving the privileged score $\mathrm{clamp}(0.15+0.85s_{\mathrm{terminal}})$. Repetition and adversarial-injection penalties are applied multiplicatively.

For \texttt{mark\_task\_complete}, the verifier searches the last $3{,}000$ characters for terminal success evidence and returns $\mathrm{clamp}(0.20+0.75s_{\mathrm{terminal}},0.20,0.95)$ when such evidence exists. A clean bash prompt yields $0.68$, no signal yields $0.55$, \texttt{image\_read} returns $0.52$, and unrecognized actions return $0.40$. On the held-out test split, the verifier achieves episode-level AUROC $0.508$ and Brier score $0.276$, reflecting near-chance separation when local step scores are mean-aggregated. This is expected: individual step scores encode local command quality rather than complete episode outcome, and the router learns to bridge this local-to-global gap.

\paragraph{Step labels.}
The verifier \(V_\phi\) scores each step using local signals, while the router label \(y_t\) is global but attached to the current context: it equals \(1\) when the fixed SLM rollout containing that context ends in episode failure, and \(0\) when the rollout succeeds. We do not observe ground-truth step success labels, and we do not define \(y_t\) by evaluating whether a teacher action would have prevented the failure. The router therefore learns a local-to-global mapping: from current SLM uncertainty, verifier statistics, and execution context to the probability that continuing with the SLM will fail the episode.

\begin{table}[h]
\centering

\caption{\textbf{Step-label distribution and verifier discrimination.} ``Successful step'' means the current context belongs to an SLM-only rollout whose binary episode outcome is success; ``failed step'' means it belongs to an SLM-only rollout whose binary episode outcome is failure.}
\label{tab:dataset_stats}
\begin{tabular}{lccccc}
\toprule
\textbf{Benchmark} & \textbf{Succ.\ steps} & \textbf{Fail.\ steps} & \textbf{Score (succ.)} & \textbf{Score (fail.)} & \textbf{AUROC (ep.)} \\
\midrule
HumanEval+          & 91.89\% & 8.11\%  & 0.620 & 0.336 & 0.856 \\
TextWorld           & 64.60\% & 35.40\% & ${\approx}0.63$ & ${\approx}0.51$ & ${\approx}0.78$ \\
TerminalBench       & 55.55\% & 44.45\% & 0.406 & 0.398 & 0.508 \\
\bottomrule
\end{tabular}
\end{table}

\subsection{Router Architecture and Training}
\label{app:router_training}




The router \(r_\psi:\mathbb{R}^{15}\to[0,1]\) is a shallow MLP with hidden widths \(128\) and \(64\), GELU activations, batch normalization, dropout \(p=0.2\), and a sigmoid output. Learnable post-hoc temperature scaling \citep{guo_calibration_2017} is applied to the output logits. The router has approximately \(10{,}000\) parameters and runs entirely on CPU.

At each step \(t\), the distilled SLM samples \(K=5\) candidate actions \(a_t^{(1)},\ldots,a_t^{(K)}\) with vLLM~\citep{kwon_vllm_2023}. The verifier scores all candidates, and the resulting \(15\)-dimensional feature vector \(f_t\) contains token-level entropy and log-probability statistics; verifier score mean, standard deviation, spread, best score, and worst score; candidate consistency and semantic entropy~\citep{kuhn_semantic_2023}; horizon fraction and absolute step index; normalized context length; and a goal-length proxy. Benchmark and perturbation-type one-hot features are reserved but disabled in the reported experiments.

Router training minimizes the differentiable CVaR-constrained Lagrangian in Eq.~\eqref{eq:cvar_router}, using \(p_t=r_\psi(f_t)\) as the soft routing decision during optimization. Hard thresholded decisions are used only for validation-time threshold selection and test-time execution. The primal optimizer is AdamW~\citep{loshchilov_adamw_2019} with learning rate \(10^{-3}\) and weight decay \(10^{-4}\). A separate Adam optimizer with learning rate \(10^{-2}\) performs gradient ascent on the log-multiplier \(\log\lambda\). We use \(\alpha=0.20\), \(\epsilon=0.10\), Brier calibration weight \(1.0\), and cost ratio \(c_{\mathrm{LLM}}/c_{\mathrm{SLM}}=50\). The router is trained for \(20\) epochs with cosine learning-rate annealing on batches of \(4{,}096\) steps.

\section{Additional Results}
\label{app:additional_results}

\subsection{Per-Model Results}
\label{app:per_model_results}

Table~\ref{tab:per_benchmark} disaggregates R2V performance across the four SLM backbones. On HumanEval+, every backbone routes fewer than $2\%$ of steps to the teacher, confirming that the benchmark is a low-escalation regime. On TextWorld, all backbones reach high SR, between $98.0\%$ and $98.4\%$, at escalation rates of $38.3$--$48.8\%$, close to the oracle's $35.4\%$ escalation rate. TerminalBench is more uneven: Qwen2.5-7B reaches $95.7\%$ SR with only $16.3\%$ escalation, while LLaMA-3.1-8B reaches $95.5\%$ SR but requires $67.2\%$ escalation. This confirms that R2V's benefit depends on both the SLM backbone and the identifiability of residual risk from online features.


\begin{table}[t]
\centering
\caption{\textbf{R2V-Agent per-model results} under noisy evaluation with 95\% bootstrap confidence intervals. Oracle is shown as a non-deployable hindsight reference.}
\label{tab:per_benchmark}
\setlength{\tabcolsep}{4.2pt}
\begin{tabular}{llccc}
\toprule
\textbf{Benchmark} & \textbf{Model} & \textbf{SR (\%)} & \textbf{95\% CI} & \textbf{LLM\%} \\
\midrule
HumanEval+ & Gemma-9B        & 91.9 & [89.6, 93.8] & 0.50\% \\
\rowcolor{LightGreenBox}
\textbf{} & \textbf{LLaMA-3.1-8B}    & \textbf{95.8} & \textbf{[94.3, 97.3]} & \textbf{1.46\%} \\
           & Qwen2.5-14B     & 93.1 & [91.1, 94.9] & 0.25\% \\
           & Qwen2.5-7B      & 92.4 & [90.1, 94.2] & 0.36\% \\
\rowcolor{LightBlueBox}
           & \textit{Oracle} & \textit{98.6} & -- & \textit{8.1\%} \\
\midrule
TextWorld & Gemma-9B        & 98.0 & [96.4, 98.8] & 38.3\% \\
\rowcolor{LightGreenBox}
\textbf{} & \textbf{LLaMA-3.1-8B}    & \textbf{98.4} & \textbf{[97.0, 99.0]} & \textbf{48.8\%} \\
          & Qwen2.5-14B     & 98.1 & [96.6, 98.9] & 38.4\% \\
          & Qwen2.5-7B      & 98.3 & [96.9, 99.0] & 41.2\% \\
\rowcolor{LightBlueBox}
          & \textit{Oracle} & \textit{98.6} & -- & \textit{35.4\%} \\
\midrule
TerminalBench & Gemma-9B        & 92.8 & [90.6, 94.7] & 38.9\% \\
              & LLaMA-3.1-8B    & 95.5 & [93.8, 97.0] & 67.2\% \\
              & Qwen2.5-14B     & 89.2 & [86.1, 92.0] & 13.2\% \\
\rowcolor{LightGreenBox}
\textbf{}     & \textbf{Qwen2.5-7B}      & \textbf{95.7} & \textbf{[93.9, 97.1]} & \textbf{16.3\%} \\
\rowcolor{LightBlueBox}
              & \textit{Oracle} & \textit{97.8} & -- & \textit{18.4\%} \\
\bottomrule
\end{tabular}
\end{table}

\subsection{Full Ablations}
\label{app:full_ablations}

\paragraph{Consistency regularization.}
Table~\ref{tab:lambda_abl} sweeps $\lambda_{\mathrm{cons}}\in\{0,0.05,0.2,0.5,1.0\}$ on Qwen2.5-7B. HumanEval+ exposes a cost--accuracy tradeoff: $\lambda_{\mathrm{cons}}=0$ gives the highest SR at $94.0\%$, while larger values reduce escalation, reaching $0.2\%$ at $\lambda_{\mathrm{cons}}=1.0$. TextWorld is comparatively insensitive, staying between $97.6\%$ and $98.3\%$ SR across the sweep. TerminalBench benefits from moderate consistency: $\lambda_{\mathrm{cons}}=0.20$ gives the best SR, $95.7\%$, while reducing LLM calls from $35.7\%$ to $16.3\%$. Increasing $\lambda_{\mathrm{cons}}$ further reduces escalation slightly, but no longer improves SR.


\begin{table}[h]
\centering
\caption{\textbf{Effect of consistency regularization} $\lambda_{\mathrm{cons}}$ on Qwen2.5-7B under noisy evaluation. $\lambda_{\mathrm{cons}}=0$ corresponds to pure DPO without the JSD term.}
\label{tab:lambda_abl}
\setlength{\tabcolsep}{3.8pt}
\begin{tabular}{lcccccccc}
\toprule
& \multicolumn{2}{c}{\textbf{HumanEval+}} & &
  \multicolumn{2}{c}{\textbf{TextWorld}} & &
  \multicolumn{2}{c}{\textbf{TerminalBench}} \\
\cmidrule(lr){2-3}\cmidrule(lr){5-6}\cmidrule(lr){8-9}
$\lambda_{\mathrm{cons}}$ & SR (\%) & LLM\% && SR (\%) & LLM\% && SR (\%) & LLM\% \\
\midrule
\rowcolor{LightBlueBox}
0     & 94.0 & 1.1\% && 97.8 & 35.4\% && 93.8 & 35.7\% \\
0.05  & 93.4 & 0.8\% && 98.0 & 38.2\% && 94.9 & 24.0\% \\
\rowcolor{LightGreenBox}
\textbf{0.20}  & \textbf{92.4} & \textbf{0.4\%} && \textbf{98.3} & \textbf{41.2\%} && \textbf{95.7} & \textbf{16.3\%} \\
0.50  & 92.6 & 0.3\% && 98.1 & 43.0\% && 95.3 & 15.9\% \\
1.0   & 92.1 & 0.2\% && 97.6 & 45.0\% && 94.7 & 15.0\% \\
\bottomrule
\end{tabular}
\end{table}

\paragraph{CVaR sensitivity.}
Table~\ref{tab:cvar_sweep} samples from a $5\times6$ grid of $(\alpha,\varepsilon)$ configurations evaluated on the combined benchmark suite. The sweep supports the role of the CVaR constraint in the routing objective. Tighter tail-risk budgets increase teacher usage and improve reliability: $(\alpha,\varepsilon)=(0.05,0.02)$ reaches $98.05\%$ SR but escalates $56.63\%$ of steps. Relaxing the constraint reduces cost with a measurable SR tradeoff: $(0.20,0.15)$ uses $21.11\%$ LLM calls and reaches $95.29\%$ SR. The canonical setting $(0.20,0.10)$ lies between these extremes, with $97.76\%$ SR and $27.47\%$ LLM calls.


\begin{table}[h]
\centering
\caption{\textbf{CVaR hyperparameter sensitivity} on combined HumanEval+/TextWorld/TerminalBench, averaged over all SLM backbones. Bold indicates the canonical operating point.}
\label{tab:cvar_sweep}
\setlength{\tabcolsep}{5pt}
\begin{tabular}{cccc}
\toprule
$\alpha$ & $\varepsilon$ & SR (\%) & LLM\% \\
\midrule
\rowcolor{LightBlueBox}
0.05 & 0.02 & 98.05 & 56.63 \\
0.05 & 0.10 & 97.38 & 27.75 \\
0.10 & 0.02 & 97.28 & 48.59 \\
0.10 & 0.10 & 97.89 & 26.23 \\
\rowcolor{LightGreenBox}
\textbf{0.20} & \textbf{0.10} & \textbf{97.76} & \textbf{27.47} \\
0.20 & 0.15 & 95.29 & 21.11 \\
0.50 & 0.10 & 97.82 & 29.61 \\
0.50 & 0.15 & 95.55 & 21.49 \\
\bottomrule
\end{tabular}
\end{table}

\paragraph{Feature groups.}
Table~\ref{tab:feature_ablation} evaluates progressively masked feature dimensions at inference time without retraining. The results explain why the learned router outperforms simple entropy routing. On TextWorld, entropy-only routing reaches only $64.6\%$ SR, matching the SLM-only baseline, while verifier-only routing recovers to $97.9\%$ SR but requires $78.5\%$ LLM calls. The full feature set gives a better balance, reaching $98.3\%$ SR with $41.2\%$ escalation. On HumanEval+, entropy is more useful, but entropy-only routing still over-escalates at $20.4\%$ LLM calls. On TerminalBench, feature removal mainly hurts the router's ability to detect hard shell states: entropy-only falls to $87.6\%$ SR, and removing the verifier lowers SR to $90.6\%$.


\begin{table}[h]
\centering
\caption{\textbf{Feature group ablation} on Qwen2.5-7B under noisy evaluation. Feature masking is applied at inference time without retraining.}
\label{tab:feature_ablation}
\setlength{\tabcolsep}{3.5pt}
\begin{tabular}{lcccccc}
\toprule
& \multicolumn{2}{c}{\textbf{HumanEval+}} &
  \multicolumn{2}{c}{\textbf{TextWorld}} &
  \multicolumn{2}{c}{\textbf{TerminalBench}} \\
\cmidrule(lr){2-3}\cmidrule(lr){4-5}\cmidrule(lr){6-7}
\textbf{Feature set} & SR & LLM\% & SR & LLM\% & SR & LLM\% \\
\midrule
\rowcolor{LightGreenBox}
\textbf{All features}        & \textbf{92.4} & \textbf{0.4\%}  & \textbf{98.3} & \textbf{41.2\%} & \textbf{95.7} & \textbf{16.3\%} \\
Verifier + entropy  & 96.8 & 9.7\%  & 98.4 & 82.0\% & 95.1 & 24.0\% \\
Verifier only       & 94.4 & 6.3\%  & 97.9 & 78.5\% & 94.8 & 20.7\% \\
\rowcolor{LightBlueBox}
Entropy only        & 96.0 & 20.4\% & 64.6 & 2.4\%  & 87.6 & 18.7\% \\
Log-prob only       & 94.1 & 1.5\%  & 96.5 & 64.3\% & 91.0 & 5.4\% \\
Step/context only   & 93.2 & 2.8\%  & 70.4 & 1.2\%  & 86.8 & 1.8\% \\
\rowcolor{LightBlueBox}
w/o verifier        & 92.4 & 1.5\%  & 62.6 & 19.8\% & 90.6 & 2.2\% \\
\rowcolor{LightBlueBox}
w/o entropy         & 92.3 & 0.3\%  & 62.6 & 31.8\% & 91.8 & 12.0\% \\
\bottomrule
\end{tabular}
\end{table}

\subsection{Closed-Source Model Adaptation}
\label{app:closed_source_app}

We also evaluate a closed-source setting where token-level log-probabilities, and therefore entropy features, may be unavailable. Table~\ref{tab:closed_source} shows that replacing token entropy with pseudo-entropy from verifier-score distributions largely preserves R2V's performance, while fully removing entropy-like signals degrades calibration and success, especially on TerminalBench.

\begin{table}[H]
\centering
\caption{\textbf{Closed-source adaptation results} averaged across four SLM backbones under noisy evaluation. ``Full'' is the canonical 15-dimensional feature set. ECE and Brier score are measured on the router validation split.}
\label{tab:closed_source}
\setlength{\tabcolsep}{4pt}
\begin{tabular}{llcccc}
\toprule
\textbf{Benchmark} & \textbf{Feature variant} & SR (\%) & LLM\% & ECE & Brier \\
\midrule
\rowcolor{LightBlueBox}
HumanEval+ & Full              & 93.3 & 0.65\% & 0.067 & 0.114 \\
           & No entropy        & 92.8 & 0.50\% & 0.074 & 0.122 \\
\rowcolor{LightGreenBox}
\textbf{} & \textbf{Pseudo-entropy}    & \textbf{93.2} & \textbf{0.70\%} & \textbf{0.069} & \textbf{0.116} \\
\midrule
\rowcolor{LightBlueBox}
TextWorld & Full              & 98.2 & 41.7\% & 0.052 & 0.102 \\
          & No entropy        & 97.4 & 37.0\% & 0.065 & 0.124 \\
\rowcolor{LightGreenBox}
\textbf{} & \textbf{Pseudo-entropy}    & \textbf{98.1} & \textbf{42.0\%} & \textbf{0.054} & \textbf{0.106} \\
\midrule
\rowcolor{LightBlueBox}
TerminalBench & Full              & 93.3 & 33.9\% & 0.162 & 0.185 \\
              & No entropy        & 90.9 & 25.6\% & 0.238 & 0.262 \\
\rowcolor{LightGreenBox}
\textbf{} & \textbf{Pseudo-entropy}    & \textbf{92.8} & \textbf{33.1\%} & \textbf{0.181} & \textbf{0.204} \\
\bottomrule
\end{tabular}
\end{table}

\section{Additional Related Work}
\label{app:related_work}

\paragraph{LLM cascades and model routing.}
A large body of recent work studies how to reduce LLM inference cost by routing requests across models of different capability and price. Cascade-style methods query a cheaper model first and escalate only when a scoring rule predicts that the response is insufficient \citep{frugalgpt_2023, dekoninck_unified_2024}. Ex-ante routers instead predict the appropriate model before generation, often using query embeddings, preference labels, or correctness labels \citep{routellm_2024, ding_hybrid_2024, zhuang_embedllm_2024, hu_routerbench_2024}. These approaches are well matched to single-turn tasks where the input query is the primary determinant of difficulty. They are less suited to agentic POMDPs, where the risk of using the local model changes after each action and observation. R2V differs by learning a step-level router whose input features summarize the current context, SLM uncertainty, verifier scores, and recent execution signals. This makes routing a stateful decision over the trajectory rather than a static decision over the initial prompt.

\paragraph{Agentic environments and sequential tool use.}
Modern LLM agents operate through iterative interaction with external tools, web pages, terminals, or simulated environments \citep{mialon_gaia_2023, yang_sweagent_2024, zhou_webarena_2024, shridhar_alfworld_2021}. In such settings, the agent must recover from partial observations, invalid actions, tool failures, and compounding mistakes. Prior work primarily improves the agent policy or the agent-computer interface; R2V instead studies when a cheap local policy should defer to a stronger teacher during the trajectory. This distinction is important because the frontier model need not be used uniformly: many steps are routine, while a small number of states determine whether the episode recovers or fails. R2V formalizes this as residual-risk estimation for a fixed deployed SLM.

\paragraph{Process reward models and verifiers.}
Outcome reward models provide only trajectory-level feedback, whereas process reward models supervise intermediate reasoning or action steps \citep{lightman_prm_2024}. This has improved mathematical reasoning \citep{wang_mathshepherd_2024}, code generation and test-time search \citep{snell_testtime_2024}, and agentic control \citep{murty_agentprm_2024}. Execution-based verification is especially useful when available, since unit tests, terminal outputs, or environment transitions provide grounded evidence about action quality \citep{liu_evalplus_2023}. R2V builds on this process-supervision view but uses the verifier in three roles: to rank SLM candidates, to construct preference pairs for recovery distillation, and to supply risk features for routing. Thus the verifier is not merely an auxiliary evaluator; it is the bridge between policy improvement and compute allocation.

\paragraph{Preference optimization, recovery training, and self-correction.}
Preference optimization methods such as DPO provide an offline alternative to reinforcement learning from scalar rewards by directly increasing the likelihood of preferred responses relative to rejected ones \citep{rafailov_dpo_2023}. Self-correction and recovery-oriented training methods similarly aim to improve behavior after an initial mistake or weak response \citep{kumar_score_2024}. R2V adopts the recovery perspective but changes the data-generating process: the BC policy is itself imitation-trained on a trajectory pool that already includes synthetically perturbed observations, and preference pairs are then constructed by this trained BC policy's rollouts. So training focuses on the SLM's own recoverable errors rather than generic preference data. The BC policy is frozen as the DPO reference, and the second stage optimizes only the verifier-guided DPO and consistency losses. This ordered pipeline avoids treating BC, preference learning, consistency, and routing as a single joint objective, which would entangle capability transfer with deployment-time compute allocation.

\paragraph{Robustness to perturbations and consistency regularization.}
Robust LLM training often uses perturbations to encourage invariance to prompt changes, noisy contexts, or semantically equivalent inputs \citep{qiang_perturbation_2024, wang_cream_2025}. In agent environments, perturbations can also represent tool flakiness, missing observations, corrupted feedback, latency-induced truncation, or distracting context. R2V uses perturbations at two levels. First, perturbed contexts are mixed into the BC training pool and then re-used during recovery distillation: this exposes failure modes that survive imitation under noise and provides states for verifier-guided DPO. Second, paired perturbation views support consistency regularization, encouraging the SLM to produce stable action distributions across different observations of the same latent state. This is complementary to the router: consistency reduces unnecessary escalation by making the SLM more stable, while the router handles the residual high-risk states that remain after distillation.

\paragraph{Uncertainty, calibration, and risk-sensitive control.}
Many routing systems rely on uncertainty proxies such as entropy, margin, or confidence. However, token-level uncertainty can be poorly aligned with downstream agent failure, especially under partial observability or when the model is confidently wrong. Calibration methods such as temperature scaling and proper scoring rules address whether predicted probabilities match empirical frequencies \citep{guo_calibration_2017}. R2V uses Brier calibration so that the router output estimates the conditional probability that teacher intervention would be useful at the current step. This calibrated probability supports a cost-sensitive threshold derived from the relative cost of SLM and LLM execution and the penalty for missed escalation.

R2V further connects model routing to risk-sensitive RL. CVaR objectives optimize the tail of the return or loss distribution rather than only its mean, and have been used to produce policies that are robust to rare but severe failures \citep{chow_cvar_2014, tamar_cvar_2015}. R2V applies this principle to hybrid SLM--LLM execution: the router is trained not only to reduce average cost, but also to avoid policies that are cheap on average while failing catastrophically under the worst perturbation seeds. This yields a routing interface that is both probabilistic and risk-aware, in contrast to ad hoc entropy thresholds or purely average-case cascade rules.

\paragraph{Positioning.}
Table~\ref{tab:comparison} summarizes the closest points of comparison. Existing LLM routers reduce cost but are largely query-level and stateless. Agent verifiers provide useful process feedback but do not decide when to allocate frontier-model compute. Recovery-training methods improve the base policy but do not produce calibrated escalation decisions. R2V combines these strands by training a local SLM for perturbation recovery and then learning a CVaR-calibrated step router over the fixed policy's remaining failure modes.

\section{Compute Resources and Asset Licenses}
\label{app:compute_licenses}

\subsection{Compute Resources}
\label{app:compute}

All GPU-bound stages were executed on a single server with \textbf{four NVIDIA H100 80\,GB SXM GPUs}. The full pipeline was run independently for each of the three benchmarks (HumanEval+, TextWorld, TerminalBench) and each of the four SLM backbones (Gemma-9B, LLaMA-3.1-8B, Qwen2.5-7B, Qwen2.5-14B); ablation sweeps used subsets of this grid. Table~\ref{tab:compute} gives approximate wall-clock times for each pipeline stage. Stages that are purely API-driven or CPU-only are listed separately.

\begin{table}[h]
\centering
\caption{\textbf{Approximate wall-clock times per pipeline stage} for a single benchmark--backbone configuration on four H100 80\,GB GPUs. ``API'' stages consume no local GPU; the router training and evaluation stages run on CPU.}
\label{tab:compute}
\setlength{\tabcolsep}{4pt}
\begin{tabular}{clll}
\toprule
\textbf{Stage} & \textbf{Description} & \textbf{Hardware} & \textbf{Approx.\ time} \\
\midrule
1  & Trajectory collection       & API (no GPU)   & 3--6 h \\
2  & Perturbation generation     & CPU            & $\sim$2 min \\
3a & Static split creation       & CPU            & $<$1 min \\
3b & Behavioral cloning (BC)     & 1$\times$ H100 & 2--3 h \\
4  & Candidate generation (vLLM) + Verifier Scoring & 1$\times$ H100 & 4--7 h \\
5  & DPO preference training     & 4$\times$ H100 & 12--16 h \\
6  & Router feature extraction   & 1$\times$ H100 & 4--7 h \\
7  & Router training             & CPU            & $\sim$25 s \\
8  & Offline evaluation          & CPU            & $<$1 min \\
9 & Ablation sweeps             & CPU            & $<$5 min \\
\bottomrule
\end{tabular}
\end{table}

\paragraph{Total compute budget.}
The end-to-end pipeline (stages 1--10) requires approximately \textbf{25--40 GPU-hours per benchmark--backbone configuration}. Summing over three benchmarks and four backbones yields roughly \textbf{300--480 GPU-hours} for the main experiments reported in the paper. Ablation sweeps (CVaR grid, consistency sweep, feature ablations) add approximately 20\% overhead, bringing the total to approximately \textbf{360--576 H100 GPU-hours}. The router (stage~8) and all evaluation stages (stages~9--10) run exclusively on CPU and contribute negligibly to compute cost.

\paragraph{API cost.}
Trajectory collection (stage~1) and teacher escalation calls use the Gemini, OpenAI, and Anthropic APIs. HumanEval+ and TextWorld teacher trajectories are collected with Gemini~3~Flash~Preview; TerminalBench uses a heterogeneous pool of Claude~Opus~4.6, GPT-5.5, and Gemini~3.1~Pro. Estimated API expenditure for the full set of trajectories is on the order of \textbf{\$200--\$500} in total, dominated by TerminalBench frontier-model calls.

\subsection{Asset Licenses and Terms of Use}
\label{app:licenses}

Table~\ref{tab:licenses} lists every third-party dataset, model backbone, and key software library used in R2V-Agent, along with the associated license or terms of use. All assets were used in accordance with their stated licenses and intended research purposes.

\begin{table}[h]
\centering
\caption{\textbf{Licenses and terms of use for all third-party assets.} API services are governed by the respective provider's terms of service (ToS). ``Research use'' indicates that the license or ToS permits academic non-commercial research.}
\label{tab:licenses}
\setlength{\tabcolsep}{4pt}
\begin{tabular}{llll}
\toprule
\textbf{Asset} & \textbf{Type} & \textbf{License / Terms} & \textbf{Use} \\
\midrule
\multicolumn{4}{l}{\textit{Datasets and benchmarks}} \\
EvalPlus / HumanEval+~\citep{liu_evalplus_2023} & Benchmark & Apache~2.0 & Research \\
ALFWorld~\citep{shridhar_alfworld_2020}          & Benchmark & MIT License & Research \\
TerminalBench~\citep{lee_terminalbench_2025}     & Benchmark & Apache~2.0  & Research \\
\texttt{yoonholee/terminalbench-trajectories}    & Dataset   & Apache~2.0  & Research \\
\midrule
\multicolumn{4}{l}{\textit{Open-weight SLM backbones}} \\
Gemma-9B~\citep{gemma_2024}      & Model & Gemma Terms of Use & Research \\
LLaMA-3.1-8B~\citep{llama3_2024} & Model & LLaMA~3.1 Community License & Research \\
Qwen2.5-7B~\citep{qwen25_2024}  & Model & Apache~2.0 & Research \\
Qwen2.5-14B~\citep{qwen25_2024} & Model & Apache~2.0 & Research \\
\midrule
\multicolumn{4}{l}{\textit{API-accessed teacher models}} \\
Gemini~3~Flash~Preview           & API & Google API ToS & Research \\
Gemini~3.1~Pro                   & API & Google API ToS & Research \\
GPT-5.5                            & API & OpenAI API ToS & Research \\
Claude~Opus~4.6                  & API & Anthropic API ToS & Research \\
\midrule
\multicolumn{4}{l}{\textit{Key software libraries}} \\
vLLM~\citep{kwon_vllm_2023}           & Library & Apache~2.0 & Research \\
HuggingFace Transformers              & Library & Apache~2.0 & Research \\
HuggingFace Accelerate                & Library & Apache~2.0 & Research \\
DeepSpeed                             & Library & Apache~2.0 & Research \\
PyTorch                               & Library & BSD-style   & Research \\
\bottomrule
\end{tabular}
\end{table}


\end{document}